\documentclass[JIP]{apris}

\usepackage{graphicx}
\usepackage{latexsym}
\usepackage{cite}
\usepackage{amsmath,amssymb,amsfonts,amsthm}
\usepackage{subcaption}
\usepackage{algorithmic}
\usepackage{array}
\usepackage{tabularx}
\usepackage{textcomp}
\usepackage{xcolor}
\usepackage[capitalize]{cleveref}

\usepackage{tikz}
\usepackage{lipsum}
\makeatletter
\def\ipsjcopyright{
    \footnotesize © 2025 IPSJ. Personal use of this material is permitted.\newline
    DOI: XXX
}
\makeatother
\AddToHook{shipout/firstpage}{
    \begin{tikzpicture}[remember picture,overlay]
        \node[anchor=south west,xshift=1.0cm,yshift=0.8cm] at (current page.south west){\parbox{\linewidth}{\raggedright\ipsjcopyright}};
    \end{tikzpicture}
}

\theoremstyle{plain}
\newtheorem{thm}{Theorem}
\newtheorem{lem}[thm]{Lemma}
\newtheorem{crl}[thm]{Corollary}
\newtheorem{dfn}[thm]{Definition}

\crefname{equation}{Eq.}{Eqs.}
\crefname{thm}{Theorem}{Theorems}
\crefname{lem}{Lemma}{Lemmas}
\crefname{crl}{Corollary}{Corollaries}
\crefname{def}{Definition}{Definitions}

\captionsetup[subfigure]{subrefformat=simple,labelformat=simple}

\makeatletter
\def\@fig@maybe@boldnum#1{%
  \begingroup
    \let\bf@or@normal\normalfont
    \@ifundefined{used@#1}{%
      \expandafter\gdef\csname used@#1\endcsname{used}%
      \let\bf@or@normal\bfseries
    }{}%
    \bf@or@normal\ref{#1}%
  \endgroup
}
\@ifundefined{if@prefixbold}{\newif\if@prefixbold}{}
\def\@doFigsRange#1#2#3{%
  \begingroup
    \@prefixboldfalse
    \@ifundefined{used@#2}{\@prefixboldtrue}{}%
    \@ifundefined{used@#3}{\@prefixboldtrue}{}%
    \if@prefixbold {\bfseries#1}\else #1\fi
    \@fig@maybe@boldnum{#2}--\@fig@maybe@boldnum{#3}%
  \endgroup
}
\def\figslist#1{%
  \begingroup
    \@prefixboldfalse
    \newcount\@n \@n=0
    \@for\@t:=#1\do{%
      \advance\@n by 1\relax
      \@ifundefined{used@\@t}{\@prefixboldtrue}{}%
    }%
    \if@prefixbold {\bfseries\short@figs}\else \short@figs\fi
    \newcount\@i \@i=0
    \@for\@t:=#1\do{%
      \advance\@i by 1\relax
      \@fig@maybe@boldnum{\@t}%
      \ifnum\@i<\@n
        \ifnum\@i=\numexpr\@n-1\relax
          \ifnum\@n=2\relax
            \space and\space 
          \else
            ,\space and\space 
          \fi
        \else
          ,\space 
        \fi
      \fi
    }%
  \endgroup
}
\def\Figslist#1{%
  \begingroup
    \@prefixboldfalse
    \newcount\@n \@n=0
    \@for\@t:=#1\do{%
      \advance\@n by 1\relax
      \@ifundefined{used@\@t}{\@prefixboldtrue}{}%
    }%
    \if@prefixbold {\bfseries\long@figs}\else \long@figs\fi
    \newcount\@i \@i=0
    \@for\@t:=#1\do{%
      \advance\@i by 1\relax
      \@fig@maybe@boldnum{\@t}%
      \ifnum\@i<\@n
        \ifnum\@i=\numexpr\@n-1\relax
          \ifnum\@n=2\relax
            \space and\space
          \else
            ,\space and\space
          \fi
        \else
          ,\space
        \fi
      \fi
    }%
  \endgroup
}
\def\figsrange{\@ifstar\figsrange@star\figsrange@nostar}
\def\figsrange@nostar#1#2{%
  \@doFigsRange{\short@figs}{#1}{#2}%
}
\def\figsrange@star#1#2#3{%
  \begingroup
    \@for\@t:=#3\do{%
      \@ifundefined{used@\@t}{\expandafter\gdef\csname used@\@t\endcsname{used}}{}%
    }%
  \endgroup
  \@doFigsRange{\short@figs}{#1}{#2}%
}
\def\Figsrange{\@ifstar\Figsrange@star\Figsrange@nostar}
\def\Figsrange@nostar#1#2{%
  \@doFigsRange{\long@figs}{#1}{#2}%
}
\def\Figsrange@star#1#2#3{%
  \begingroup
    \@for\@t:=#3\do{%
      \@ifundefined{used@\@t}{\expandafter\gdef\csname used@\@t\endcsname{used}}{}%
    }%
  \endgroup
  \@doFigsRange{\long@figs}{#1}{#2}%
}

\DeclareRobustCommand{\eqref}[1]{\textup{(\ref{#1})}}
\makeatother

\newif\ifpng
\pngfalse  

\def\Underline{\setbox0\hbox\bgroup\let\\\endUnderline}
\def\endUnderline{\vphantom{y}\egroup\smash{\underline{\box0}}\\}
\def\|{\verb|}

\usepackage[varg]{txfonts}
\makeatletter%
\input{ot1txtt.fd}
\makeatother%

\begin{document}

\title{Generalized Inequality-based Approach for Probabilistic WCET Estimation}

\affiliate{SU}{Graduate School of Science and Engineering, Saitama University}
\affiliate{AS}{Academic Association (Graduate School of Science and Engineering), Saitama University}
\affiliate{T4}{TIER IV Incorporated}

\author{Hayate Toba}{SU}[h.toba.498@ms.saitama-u.ac.jp]
\author{Atsushi Yano}{SU,T4}
\author{Takuya Azumi}{AS,T4}

\begin{abstract}
Estimating the probabilistic Worst-Case Execution Time (pWCET) is essential for ensuring the timing correctness of real-time applications, such as in robot IoT systems and autonomous driving systems.
While methods based on Extreme Value Theory (EVT) can provide tight bounds, they suffer from model uncertainty due to the need to decide where the upper tail of the distribution begins.
Conversely, inequality-based approaches avoid this issue but can yield pessimistic results for heavy-tailed distributions.
This paper proposes a method to reduce such pessimism by incorporating saturating functions (arctangent and hyperbolic tangent) into Chebyshev's inequality, which mitigates the influence of large outliers while preserving mathematical soundness.
Evaluations on synthetic and real-world data from the Autoware autonomous driving stack demonstrate that the proposed method achieves safe and tighter bounds for such distributions.

\end{abstract}

\begin{keyword}
real-time systems, autonomous driving systems, automobiles, probabilistic WCET, MBPTA, Chebyshev's inequality, saturating function
\end{keyword}

\maketitle

\section{Introduction}\label{sec:intro}

The estimation of the Worst-Case Execution Time (WCET) of programs is essential to ensure that complex robotic and IoT applications, such as autonomous vehicles, meet their timing constraints~\cite{2008_TECS}.
The growing computational demands of applications such as sensor fusion in modern robotics have greatly increased platform complexity (e.g., multi- and many-core processors), making traditional static timing analysis extremely difficult.
From the perspective of preference, soft real-time systems, in which occasional deadline misses are tolerable within a specified probability, deliberately prefer probabilistic over deterministic guarantees.
Consequently, measurement-based probabilistic WCET (pWCET) estimation, which predicts the exceedance probability of the upper-tail values of execution-time distributions, has gained attention.

Among existing approaches, pWCET estimation that applies Extreme Value Theory (EVT)~\cite{EVT,1928_fisher,1943_gnedenko,1974_balkema,1975_pickands} has attracted significant attention~\cite{2010_WCET,2012_ECRTS,2017_RTSS,2017_TODAES,2019_ACMSurv}.
When its prerequisites are satisfied and EVT is applied appropriately, it provides highly reliable pWCET estimates~\cite{2016_ECRTS,2017_ESLetters}.
However, there are cases where pWCET estimation is not possible under EVT~\cite{EVCondition}.
Moreover, EVT-based methods in practice suffer from model uncertainty.
This type of uncertainty arises from the mandatory selection of hyperparameters such as the threshold for upper-tail extraction, and no definitive rule yet exists for choosing optimal values.
To overcome this, an approach using Markov's inequality has been proposed to obtain a safe upper bound without such tuning~\cite{2022_ECRTS}. 
Despite its advantages, this method yields pessimistic results when applied to heavy-tailed distributions, which are observed in actual autonomous driving systems and thus pose a critical issue.

To address this pessimism, this paper proposes an inequality-based pWCET estimation method that incorporates saturating functions (arctangent and hyperbolic tangent) into Chebyshev's inequality.
The proposed method suppresses the influence of extreme outliers while preserving mathematical soundness, thereby providing a tighter pWCET estimation for heavy-tailed distributions without suffering from model uncertainty.
The practicality of our method is demonstrated through evaluations using data from the Autoware~\cite{autoware} autonomous driving stack.

The primary contributions of this paper are summarized as follows:
\begin{itemize}
    \item Performing pWCET estimation without model uncertainty by inequality-based approach.
    \item Reducing the pessimism of the estimation by mitigating the impact of outliers using saturating functions.
    \item Demonstrating the practicality of the proposed method through the evaluation using data from an actual system.
\end{itemize}

The remainder of this paper is organized as follows.
\cref{sec:background} presents the relevant background.
\cref{sec:proposed_method} provides a detailed explanation of the proposed method.
\cref{sec:evaluation} presents evaluations using an actual autonomous driving system and synthetic task sets.
\cref{sec:related_work} introduces related research.
\cref{sec:conclusion} presents the conclusions and future work.

\section{Background and Problem Definition}\label{sec:background}

This section aims to provide the necessary background information and clarify the problems that need to be addressed.

\subsection{Probabilistic WCET}\label{ssub:bg_pwcet}

The execution time of a program is not fixed but varies with each run.
The goal of pWCET estimation is to find a reliable upper bound for this variation.
An intuitive visualization of this concept is illustrated in \figref{fig:pwcet}.
The y-axis represents the exceedance probability (as a complementary CDF or CCDF), i.e., the probability that the execution time on the x-axis is exceeded.
The black dashed lines show different execution time distributions obtained from multiple measurement campaigns.
The objective is to find a single curve that safely bounds all these empirical distributions.
The pWCET curve, shown in red, serves as this upper bound.
For any target exceedance probability $p$, the curve provides a corresponding execution time $x$ that is guaranteed to be a safe estimate.

\begin{figure}[tb]
    \centering
    \includegraphics[width=\linewidth]{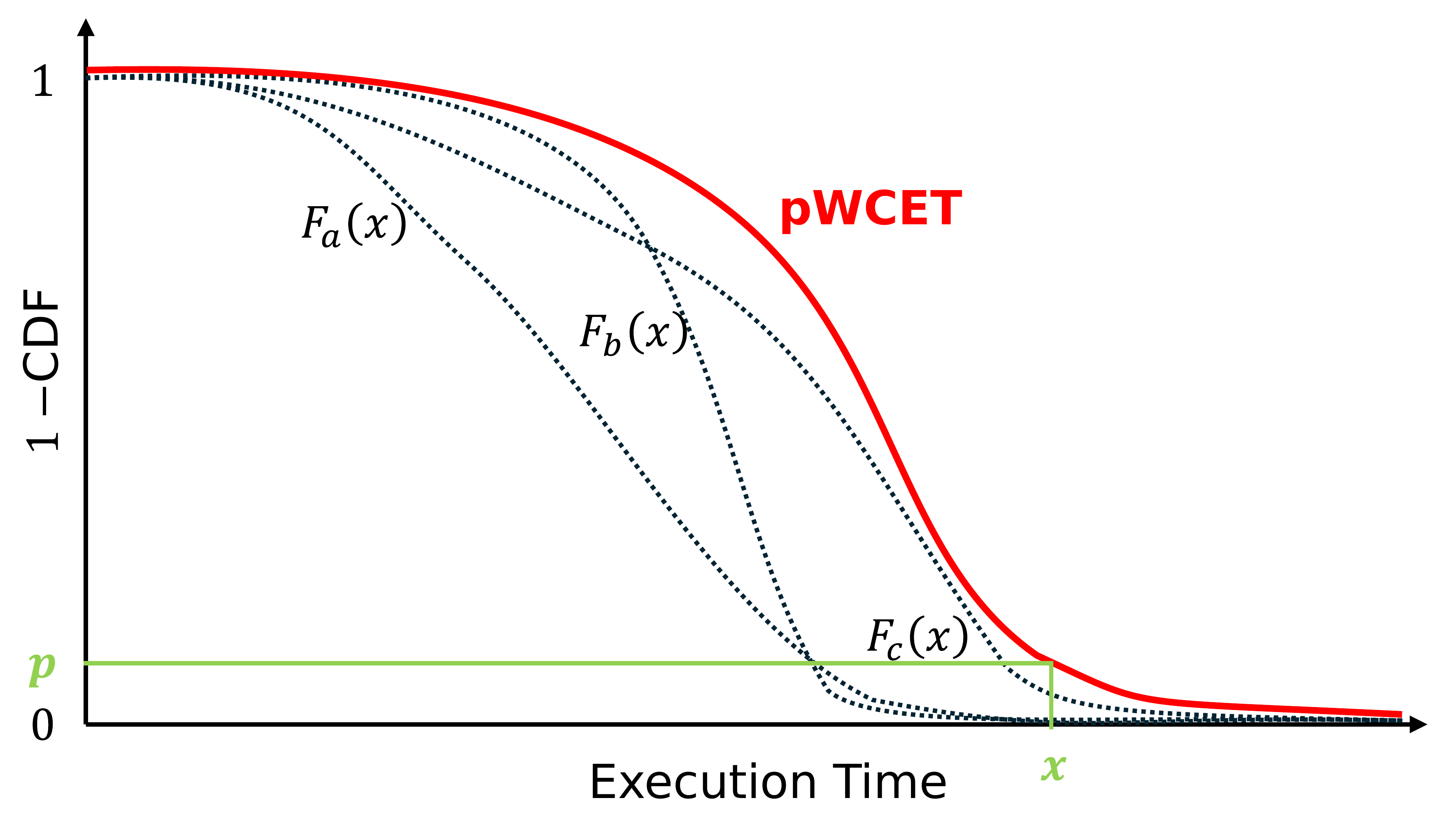}
    \caption{Execution times and pWCET}
    \label{fig:pwcet}
\end{figure}

\subsection{Chebyshev and Markov Inequalities}\label{ssub:bg_inequalities}

Our inequality-based approach is founded on Chebyshev's inequality~\cite{1867_chebyshev}, which provides a safe upper bound without the model uncertainty of EVT.

\begin{dfn}\label{dfn:exp}
    Let $X$ be a discrete random variable with probability mass function $f_{X}$.
    The expected value of $X$ is given by:
    \begin{equation}\label{eq:exp}
        E(X)=\sum_{x\in X}x\cdot f_{X}(x)=\sum_{x\in X}x\cdot P(X=x).
    \end{equation}
\end{dfn}

\begin{thm}[Chebyshev's Inequality~\cite{1867_chebyshev}]\label{thm:chebyshev}
    For a non-negative random variable $X$, $b > 0$, and a non-negative, monotonically increasing function $f$, the following holds:
    \begin{equation}\label{eq:chebyshev}
        P(X\ge b)\le\frac{E(f(X))}{f(b)}.
    \end{equation}
\end{thm}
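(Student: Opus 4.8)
The plan is to prove the inequality directly from the definition of expectation in \cref{dfn:exp}, partitioning the state space of $X$ at the threshold $b$ and then exploiting the two hypotheses on $f$ separately. The underlying idea is that $E(f(X))$ accumulates contributions $f(x)\,P(X=x)$ over all outcomes $x$, and the tail event $\{X\ge b\}$ contributes a block of terms each of which is at least $f(b)\,P(X=x)$; isolating that block and dropping the rest yields the claim.

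First I would split the defining sum at $b$, writing
\begin{equation*}
    E(f(X))=\sum_{x<b} f(x)\,P(X=x)+\sum_{x\ge b} f(x)\,P(X=x).
\end{equation*}
Next, since $f$ is non-negative, every term of the first sum is $\ge 0$, so discarding it only decreases the right-hand side, giving $E(f(X))\ge\sum_{x\ge b} f(x)\,P(X=x)$. Then, because $f$ is monotonically increasing, $x\ge b$ implies $f(x)\ge f(b)$, so I can replace $f(x)$ by the constant $f(b)$ throughout the surviving sum and factor it out:
\begin{equation*}
    E(f(X))\ge\sum_{x\ge b} f(b)\,P(X=x)=f(b)\sum_{x\ge b} P(X=x)=f(b)\,P(X\ge b).
\end{equation*}
Dividing through by $f(b)$ then delivers \eqref{eq:chebyshev}.

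The only delicate point is the division by $f(b)$, which requires $f(b)>0$ rather than merely $f(b)\ge 0$; I would either treat this as an implicit assumption (it holds for the arctangent- and $\tanh$-based choices used later, evaluated at $b>0$) or observe that when $f(b)=0$ the stated bound is vacuous, since the right-hand side is then infinite while the left-hand side is a probability bounded by one. An equivalent and slightly slicker route, if a more abstract presentation is preferred, is to note that monotonicity gives the event inclusion $\{X\ge b\}\subseteq\{f(X)\ge f(b)\}$, hence $P(X\ge b)\le P(f(X)\ge f(b))$, and then apply the classical Markov inequality to the non-negative variable $f(X)$. I expect the sum-partition argument above to be the cleaner match for the discrete setting established in \cref{dfn:exp}, so that is the version I would carry through.
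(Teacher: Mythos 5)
Your proof is correct. Note, however, that the paper itself gives no proof of \cref{thm:chebyshev}: it is stated as a cited classical result, and the paper's only proofs concern the admissibility of the specific functions $(\arctan(x/d))^k$ and $(\tanh(x/d))^k$ (\cref{lem:arctan_ng_mono,lem:tanh_ng_mono}), from which \cref{crl:chebyshev_arctan,crl:chebyshev_tanh} follow by invoking \cref{thm:chebyshev}. So there is no in-paper argument to compare against; judged on its own, your sum-partition argument is the standard derivation and is sound: splitting the expectation at $b$, discarding the sub-$b$ block by non-negativity of $f$, and lower-bounding the tail block via monotonicity gives $E(f(X))\ge f(b)\,P(X\ge b)$ exactly as you wrote. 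You were also right to flag the division step: the statement as given only assumes $f$ non-negative, so $f(b)>0$ is an implicit requirement (and indeed holds for the paper's concrete choices, since $\arctan(b/d)>0$ and $\tanh(b/d)>0$ for $b,d>0$, which is precisely what Property~\labelcref{prop:non_negative} guarantees); when $f(b)=0$ and $E(f(X))>0$ the bound is vacuously true, and when both vanish the right-hand side is undefined, so the convention you adopt is the reasonable one. Your alternative route via the inclusion $\{X\ge b\}\subseteq\{f(X)\ge f(b)\}$ followed by Markov's inequality applied to $f(X)$ is equally valid and is arguably the cleaner way to see why monotonicity is the essential hypothesis.
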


\noindent When $f$ is the identity function, that is, when $f(X) = X$, this particular form is called Markov's inequality~\cite{1884_markov}.

For WCET estimation, the random variable $X$ corresponds to the execution time of the program, and $b$ corresponds to the execution time for which the exceedance probability is to be upper bounded.

A prior method for the pWCET estimation applies this inequality by using the power-of-$k$ function, $f(X)=X^{k}$, where $k>0$~\cite{2022_ECRTS}.
This yields the following bound:

\begin{dfn}\label{dfn:exp_k}
    Let $X$ be a discrete random variable and $k$ a positive real value.
    The expected value of $X^{k}$, also defined as the $k$-th moment of $X$, is given by:
    \begin{equation}\label{eq:exp_k}
        E(X^{k})=\sum_{x}x^{k}\cdot P(X=x).
    \end{equation}
\end{dfn}

\begin{crl}[Markov's Inequality to the power-of-$k$~\cite{2022_ECRTS}]\label{crl:markov_k}
    For a non-negative random variable $X$, $b > 0$, and the power-of-$k$ function $f(X)=X^{k}$, Markov's inequality to the power-of-$k$ yields:
    \begin{equation}\label{eq:markov_k}
        P(X\ge b)\le\frac{E(X^{k})}{b^{k}}.
    \end{equation}
\end{crl}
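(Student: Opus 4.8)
The plan is to obtain this bound as a direct specialization of Chebyshev's inequality (\cref{thm:chebyshev}), since the power-of-$k$ function is simply one admissible choice of the function $f$ appearing there. First I would verify that $f(x) = x^{k}$ satisfies the two hypotheses of \cref{thm:chebyshev}: non-negativity and monotonic increase. Because $X$ is assumed non-negative, its support lies in $[0, \infty)$, and on this domain the map $x \mapsto x^{k}$ is non-negative and strictly increasing for every $k > 0$. Hence $f(x) = x^{k}$ is an admissible choice.

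With admissibility established, the remaining step is a pure substitution. Applying \cref{thm:chebyshev} with $f(X) = X^{k}$ yields
\begin{equation*}
    P(X \ge b) \le \frac{E(f(X))}{f(b)} = \frac{E(X^{k})}{b^{k}},
\end{equation*}
where the final equality uses $f(b) = b^{k}$, which is well-defined and strictly positive because $b > 0$. The numerator $E(X^{k})$ is exactly the $k$-th moment of $X$ from \eqref{eq:exp_k}, so the right-hand side coincides with \eqref{eq:markov_k}.

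Since the result follows from a single instantiation, there is no substantial obstacle in the argument itself. The only point that warrants care is confirming that the hypotheses of \cref{thm:chebyshev} hold across the entire support of $X$: non-negativity of $X$ is precisely what secures monotonicity, since over negative arguments $x \mapsto x^{k}$ need not be increasing. One should also assume implicitly that $E(X^{k})$ is finite, so that the resulting bound is non-vacuous rather than a trivial statement that the probability is at most $+\infty$.
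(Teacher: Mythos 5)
Your proposal is correct and matches the paper's approach exactly: the paper presents this corollary as a direct instantiation of Chebyshev's inequality (\cref{thm:chebyshev}) with $f(X)=X^{k}$, which is precisely your argument, including the observation that non-negativity of $X$ is what makes $x\mapsto x^{k}$ an admissible (non-negative, monotonically increasing) choice of $f$. Your added remarks on finiteness of $E(X^{k})$ and the role of the support are sound refinements but not needed beyond what the paper does.
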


Since different values of $k$ produce different bounds, the state-of-the-art MEMIK method takes the lower envelope of all such bounds to achieve the tightest possible estimate~\cite{2022_ECRTS}:

\begin{dfn}[MEMIK bound~\cite{2022_ECRTS}]\label{dfn:memik}
    For a non-negative random variable $X$, $b > 0$, and $k > 0$, the MEMIK bound is defined as follows:
    \begin{equation}\label{eq:memik}
        P(X\ge b)\le\min_{k}\frac{E(X^{k})}{b^{k}}\quad\mathrm{for}\quad k>0.
    \end{equation}
\end{dfn}

While this approach is effective, it can produce pessimistic bounds for heavy-tailed distributions, as will be shown in \cref{ssub:motivating_example}.

\subsection{Motivating Example}\label{ssub:motivating_example}

\begin{figure}[t]
    \centering
    \begin{minipage}{\linewidth}
        \centering
        \ifpng
            \includegraphics[draft=false,width=0.85\textwidth]{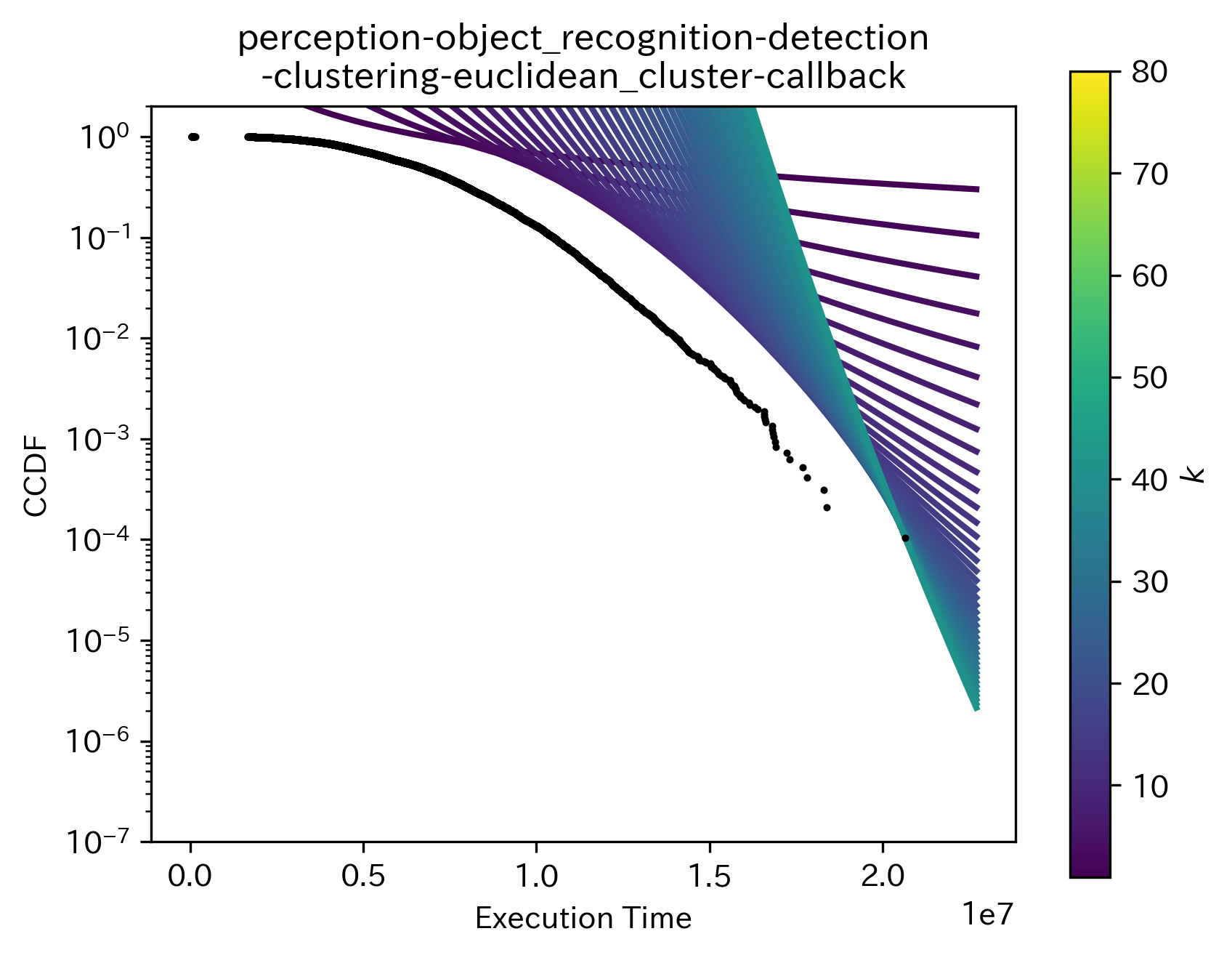}
        \else
            \includegraphics[draft=false,width=0.85\textwidth]{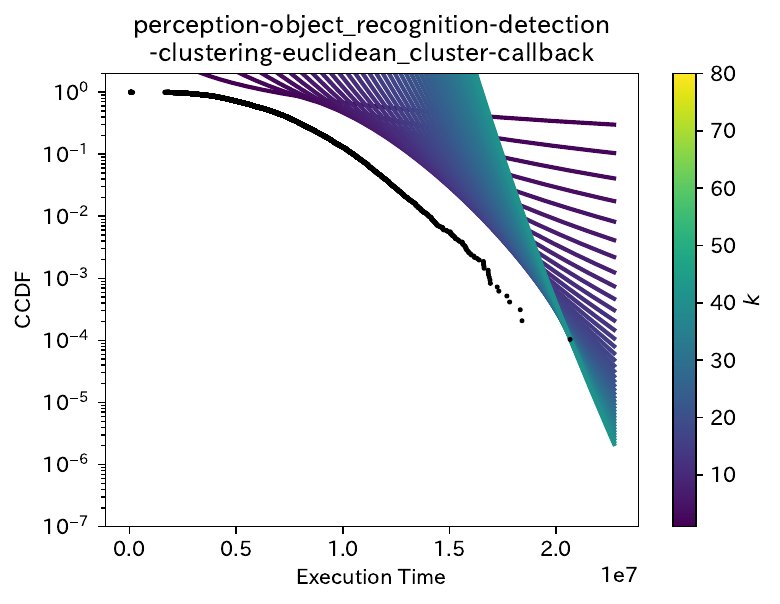}
        \fi
        \subcaption{Light-tailed distribution}
        \label{fig:mik_light}
    \end{minipage}
    \begin{minipage}{\linewidth}
        \centering
        \ifpng
            \includegraphics[draft=false,width=0.85\textwidth]{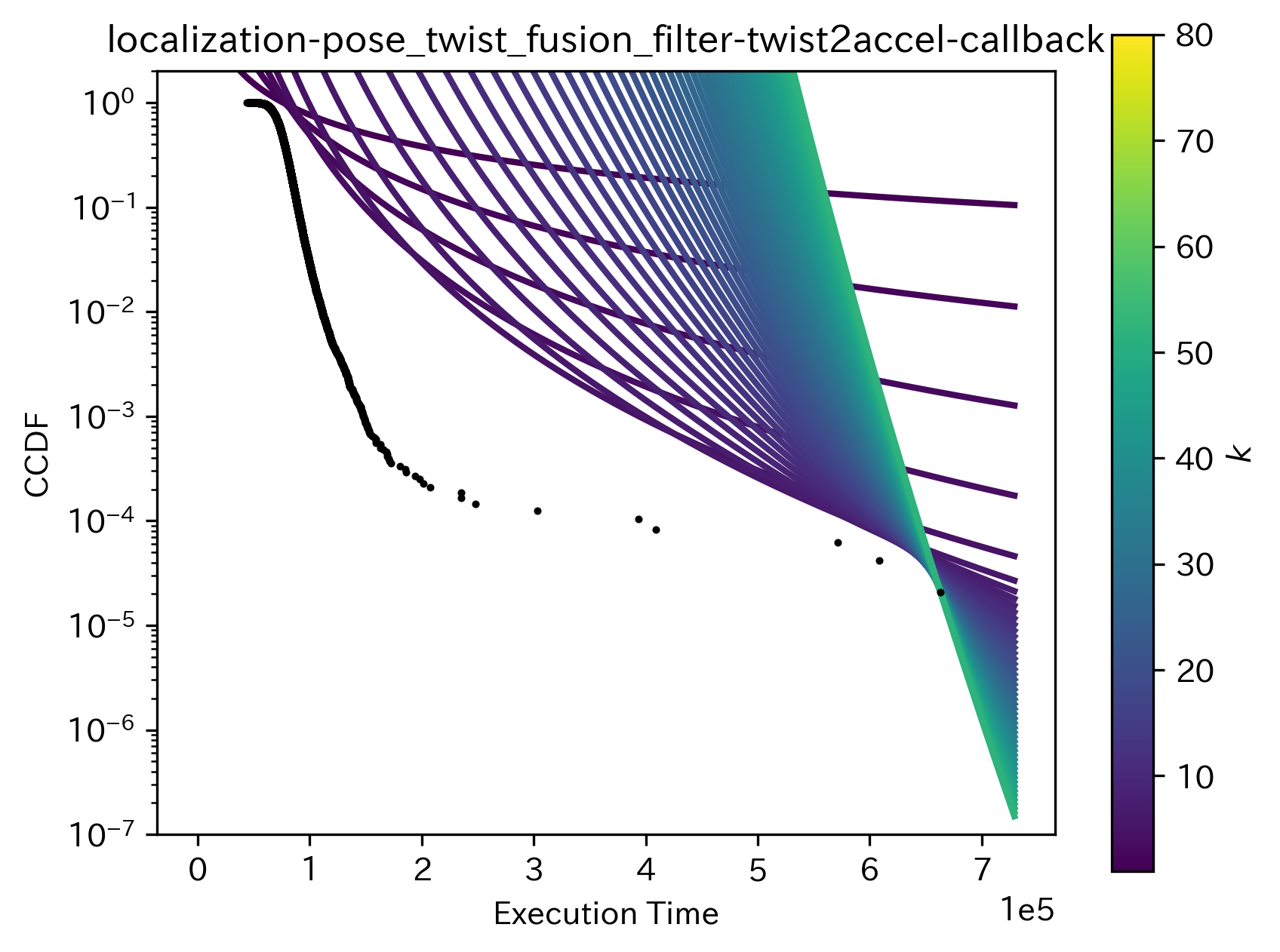}
        \else
            \includegraphics[draft=false,width=0.85\textwidth]{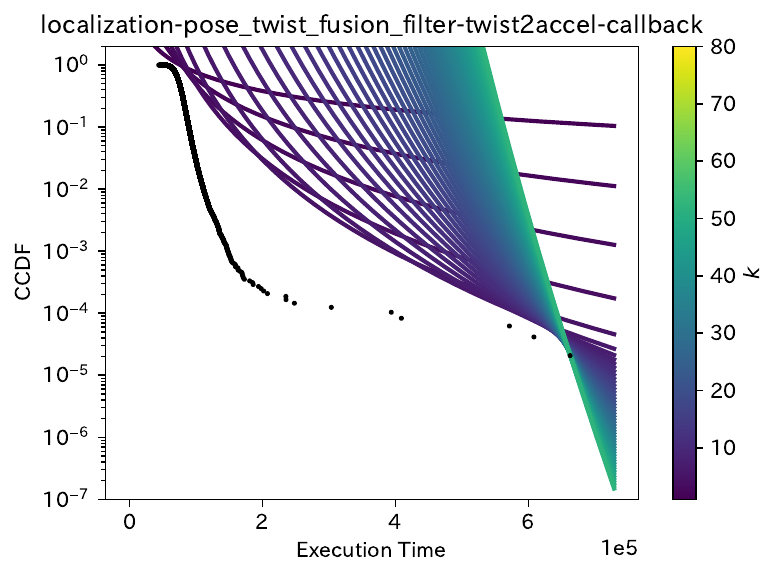}
        \fi
        \subcaption{Heavy-tailed distribution}
        \label{fig:mik_heavy}
    \end{minipage}
    \caption{Results of applying Inequality~\labelcref{eq:markov_k} to measurements from an actual autonomous driving system}
    \label{fig:mik_problem}
\end{figure}

To motivate our work, we apply the MEMIK bound~\cite{2022_ECRTS} to execution-time data from Autoware~\cite{autoware}, a real autonomous driving system.
The results are shown in \figref{fig:mik_problem}, where the black curve is the observed distribution (CCDF) and the colored curves are the bounds from Inequality~\labelcref{eq:markov_k} for different $k$ values; the MEMIK bound is their lower envelope.

For a light-tailed distribution from the LiDAR clustering callback (\figref{fig:mik_light}), the MEMIK bound is tight and effective.
However, for a heavy-tailed distribution from the vehicle acceleration estimation callback (\figref{fig:mik_heavy}), the bound is highly pessimistic.
For instance, at an exceedance probability of $10^{-3}$, the estimated WCET is approximately twice the largest observed value.
This pessimism reveals a fundamental limitation of using the power-of-$k$ function with Chebyshev's inequality for such distributions, motivating our proposal of a more suitable function.

\section{Proposed Method}\label{sec:proposed_method}

Our approach adopts Chebyshev's inequality and employs a function capable of providing tight estimations even for heavy-tailed distributions, which is illustrated in the example in \cref{ssub:motivating_example}.
To identify such a function, the first step involves analyzing the causes of significant overestimation when MEMIK is applied to heavy-tailed distributions.

The core of MEMIK lies in Inequality~\labelcref{eq:markov_k}, where $X$ represents a random variable representing execution time, and $b$ denotes an arbitrary execution time value.
Thus, the probability of the execution time exceeding a given value is bounded by $E(X^{k})/b^{k}$.
When $k$ is fixed, the exceedance probability for a specific $b$ depends solely on the numerator $E(X^{k})$.
In other words, the magnitude of $E(X^{k})$ directly determines the extent of the exceedance probability.
For distributions that include rare, extremely large values, $E(X^{k})$ becomes large, leading to higher exceedance probabilities.
This effect is expected to become more pronounced as $k$ increases.
The issue with MEMIK arises from this sensitivity to the expected value of rare extreme values, which leads to overestimation for heavy-tailed distributions.
To address this, we propose to use functions that mitigate the sensitivity of the expected value, thereby reducing the impact of rare extreme values.

A variety of candidate functions can be considered to prevent the expected value from becoming excessively large for rare extreme values.
Among the promising options, two functions are proposed: $f(x) = (\arctan(x/d))^{k}$ and $f(x) = (\tanh(x/d))^{k}$.
$\arctan$ and $\tanh$ are saturating functions that grow slowly as $x$ increases, allowing the expected value to remain less sensitive to rare large observations.
This helps reduce the pessimism of the resulting bound when applied to heavy-tailed distributions.
Furthermore, both functions are well-known and computationally simple to evaluate, making them practical choices for real-time analysis.
Notably, sigmoid-shaped functions often used to limit the range of values can be represented by or approximated with $\tanh$, and are thus covered within the proposed formulation.
The following sections first demonstrate that these functions can be utilized in Chebyshev's inequality, and then explain their properties. 
According to \cref{thm:chebyshev}, the properties that the function $f$ must satisfy to be used in Chebyshev's inequality are non-negativity and monotonicity.
After verifying these properties, the bounds derived from these functions are formalized.

\begin{figure}[t]
    \centering
    \begin{minipage}{0.78\linewidth}
        \hspace{-18pt}
        \centering
        \includegraphics[draft=false,width=\textwidth]{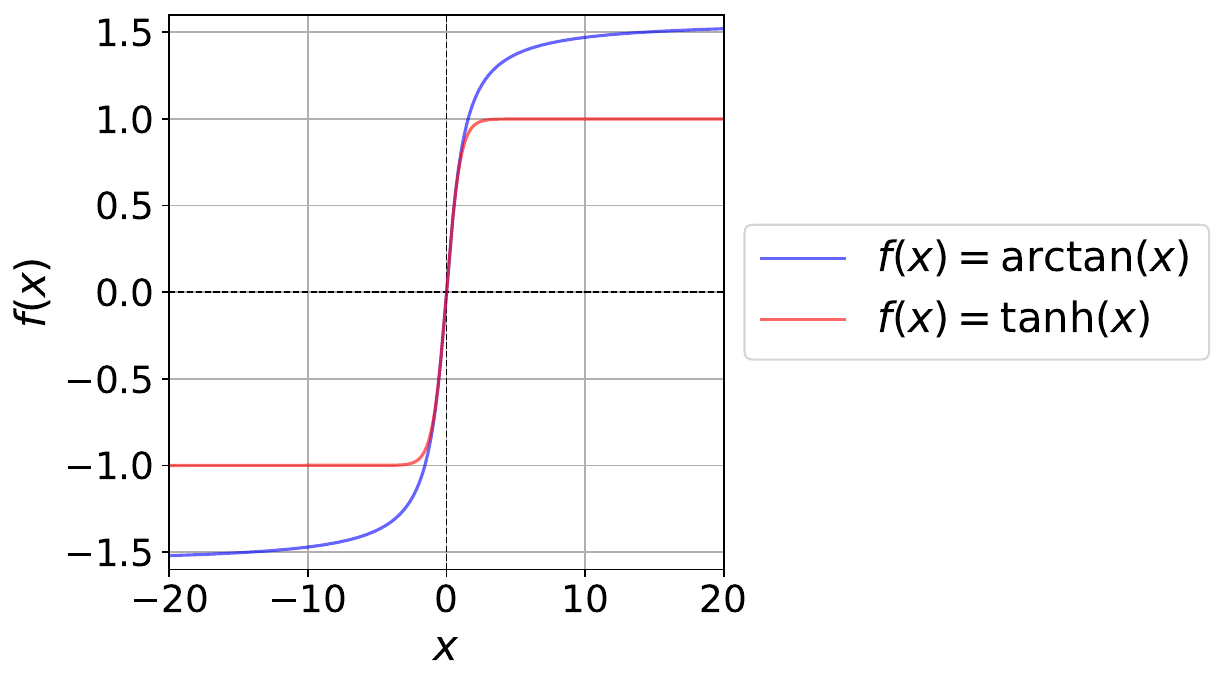}
        \subcaption{$f(x)$ around the origin}
        \label{fig:func_origin}
    \end{minipage}\\
    \begin{minipage}{0.84\linewidth}
        \centering
        \includegraphics[draft=false,width=\textwidth]{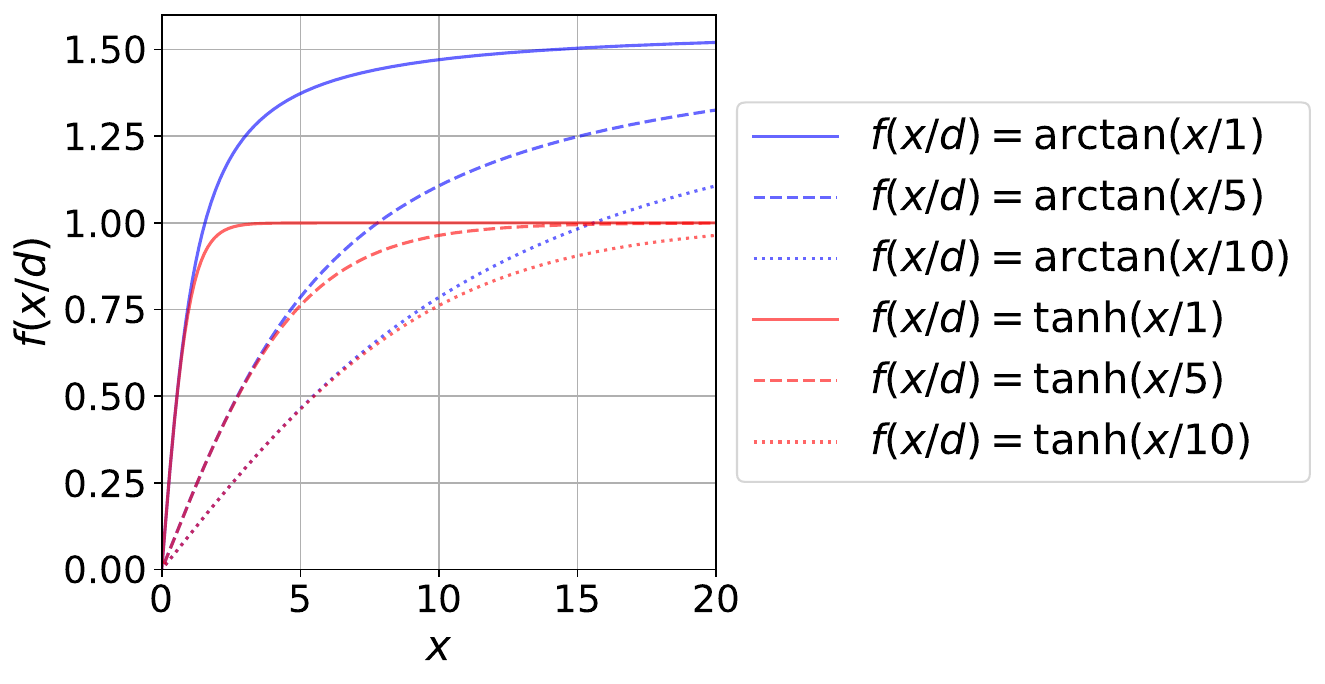}
        \subcaption{$f(x/d)$ for $x>0$}
        \label{fig:func_positive}
    \end{minipage}
    \caption{Shape of the functions $f(x)=\arctan(x)$ and $f(x)=\tanh(x)$}
    \label{fig:func}
\end{figure}

The first function is defined as $f(x) = (\arctan(x/d))^{k}$, where $d$ and $k$ are positive real numbers.
The base function $f(x) = \arctan(x)$ is illustrated in \figref{fig:func_origin} with a blue line.
However, since execution times are always positive, the focus is on $x\in\mathbb{R}^{+}=\{x\in\mathbb{R}\mid x>0\}$, as shown in \figref{fig:func_positive}. 
As evident from the figure, the growth of the function gradually slows down, which is desirable for mitigating excessive increases in the expected value caused by extreme values.
However, directly using $f(x)=\arctan(x)$ is impractical, because the range of execution times and the output range of the function do not align well.
To address this issue, a positive real number $d$ is introduced as a divisor. 
The resulting function, $f(x)=(\arctan(x/d))^{k}$, is both non-negative and monotonically increasing.

\begin{lem}\label{lem:arctan_ng_mono}
    The function $f(x)=(\arctan(x/d))^{k}$ satisfies the following properties, where $d,k\in\mathbb{R}^{+}$: 
    \begin{align}
        &\forall x\in\mathbb{R}^{+},\quad f(x)>0\label{prop:non_negative} \\
        \mathrm{and}\quad&\forall a,b\in\mathbb{R}^{+},\quad a<b\Rightarrow f(a)\le f(b).\label{prop:mono_increase}
    \end{align}
\end{lem}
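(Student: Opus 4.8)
The plan is to treat $f$ as a composition of three elementary maps — the scaling $x \mapsto x/d$, the arctangent, and the power $t \mapsto t^{k}$ — and to verify each property by tracking these maps in turn.

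For non-negativity \eqref{prop:non_negative}, I would argue that on $x \in \mathbb{R}^{+}$ the scaled argument $x/d$ is positive (since $d > 0$), that $\arctan$ sends positive inputs to positive outputs because it is strictly increasing with $\arctan(0) = 0$, and that a strictly positive base raised to a real power stays positive. Beyond establishing the claim, this step does double duty: it confirms that the argument of the power map lies in $(0, \pi/2)$, so that $t^{k}$ is well-defined for an arbitrary positive real exponent $k$ — a point I will need again below.

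For monotonicity \eqref{prop:mono_increase}, I would chain the monotonicity of the same three maps. Given $a < b$ in $\mathbb{R}^{+}$, dividing by the positive constant $d$ preserves the order, $\arctan$ preserves the order (and keeps both values positive by the previous step), and $t \mapsto t^{k}$ is increasing on the positive half-line for $k > 0$; composing these yields $f(a) \le f(b)$. The statement only asks for $\le$, so the strict inequalities available at each stage are more than sufficient.

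I do not expect a genuine obstacle, since everything reduces to the standard facts that $\arctan$ is strictly increasing on $\mathbb{R}$ and that $t \mapsto t^{k}$ is increasing on $(0, \infty)$. The single subtlety worth flagging is the coupling between the two properties: monotonicity of $t^{k}$ for non-integer $k$ holds only on the positive reals, so the positivity proved first is precisely what licenses the last step of the monotonicity argument. A differentiation-based alternative — showing $f'(x) = \frac{k}{d}\,(\arctan(x/d))^{k-1}\,\frac{1}{1+(x/d)^{2}} > 0$ for $x > 0$ — is also available, but it is messier near the origin when $k < 1$, so I would favor the compositional route.
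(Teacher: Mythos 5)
Your proof is correct, but it takes a genuinely different route from the paper's. The paper differentiates: it computes
\begin{equation*}
\frac{\mathrm{d}}{\mathrm{d}x}\left(\arctan\left(\frac{x}{d}\right)\right)^{k}
= k\cdot\left(\arctan\left(\frac{x}{d}\right)\right)^{k-1}\cdot\frac{1}{1+\left(\frac{x}{d}\right)^{2}}\cdot\frac{1}{d},
\end{equation*}
observes that every factor is positive for $x,d,k\in\mathbb{R}^{+}$, concludes monotonicity, and then obtains non-negativity from $f(0)=0$ together with the increase. You instead chain three order-preserving maps (scaling by $1/d$, $\arctan$, and $t\mapsto t^{k}$ on $(0,\infty)$), establishing positivity \emph{first} and then using it to license the power-map step --- so the logical dependency between the two properties runs in the opposite direction from the paper's. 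Your compositional argument is more elementary (no calculus) and sidesteps the factor $(\arctan(x/d))^{k-1}$, which for $k<1$ has a negative exponent and makes the derivative unbounded as $x\to 0^{+}$; this is exactly the ``messiness'' you flag, though it does not invalidate the paper's proof, since positivity of the derivative is only needed on the open half-line $x>0$, where all factors remain finite and positive. The paper's route, in exchange, is a one-line computation that transfers verbatim to the $\tanh$ case (which the paper then omits ``by similarity''), and your route would transfer equally well; both are sound, and your explicit remark that positivity is what makes $t\mapsto t^{k}$ applicable for non-integer $k$ is a point the paper leaves implicit.
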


\begin{proof}
    First, the validity of Property~\labelcref{prop:mono_increase} is demonstrated.
    The derivative of $f(x)$ is given as:
    \begin{align*}
        \frac{\mathrm{d}}{\mathrm{d}x}\left(\arctan\left(\frac{x}{d}\right)\right)^{k}=k\cdot\left(\arctan\left(\frac{x}{d}\right)\right)^{k-1}\cdot\frac{1}{1+\left(\frac{x}{d}\right)^{2}}\cdot\frac{1}{d}.
    \end{align*}
    Since $x,d,k\in\mathbb{R^{+}}$, all terms are positive, and the following holds:
    \begin{align*}
        \frac{\mathrm{d}}{\mathrm{d}x}f(x)>0.
    \end{align*}
    Thus, Property~\labelcref{prop:mono_increase} is satisfied, and $f(x)$ is a monotonically increasing function.
    Additionally, $f(0)=(\arctan(0/d))^{k}=0$, confirming that Property~\labelcref{prop:non_negative} is also satisfied, making $f(x)$ a non-negative function.
\end{proof}

As $f(x)=(\arctan(x/d))^{k}$ is a non-negative and monotonically increasing function, it can be incorporated into Chebyshev's inequality.

\begin{dfn}\label{dfn:exp_arctan}
    Let $X$ be a discrete random variable, and $d$ and $k$ be positive real values.
    The expected value of $(\arctan(X/d))^{k}$ is given by:
    \begin{align}
        E\left(\left(\arctan\frac{X}{d}\right)^{k}\right)=\sum_{x}\left(\arctan\frac{x}{d}\right)^{k}\cdot P(X=x).\label{eq:exp_arctan}
    \end{align}
\end{dfn}

\begin{crl}\label{crl:chebyshev_arctan}
    For a non-negative random variable $X$, $b>0$, and the function $f(X)=(\arctan(X/d))^{k}$, the following inequality yields:
    \begin{align}
        P(X\ge b)\le\frac{E\left(\left(\arctan\frac{X}{d}\right)^{k}\right)}{\left(\arctan\frac{b}{d}\right)^{k}}.\label{eq:chebyshev_arctan}
    \end{align}
\end{crl}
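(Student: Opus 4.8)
The plan is to obtain this corollary as an immediate specialization of Chebyshev's inequality (\cref{thm:chebyshev}) to the particular choice $f(x)=(\arctan(x/d))^{k}$. The general statement of \cref{thm:chebyshev} asks only that $f$ be non-negative and monotonically increasing; hence the entire argument reduces to checking that this specific $f$ meets those two hypotheses and then reading off the resulting bound. No new analytic work is needed, since the requisite properties of $f$ have already been established.

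First I would invoke \cref{lem:arctan_ng_mono}, which proves exactly the two properties \cref{thm:chebyshev} requires: non-negativity $f(x)>0$ for all $x\in\mathbb{R}^{+}$ (Property~\eqref{prop:non_negative}) and monotonicity $a<b\Rightarrow f(a)\le f(b)$ (Property~\eqref{prop:mono_increase}). With these in hand, the hypotheses of \cref{thm:chebyshev} are satisfied for $f(x)=(\arctan(x/d))^{k}$, so the inequality applies verbatim to this function.

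Next I would substitute this $f$ directly into Inequality~\eqref{eq:chebyshev}, giving $P(X\ge b)\le E(f(X))/f(b)$, and then rewrite the numerator as $E((\arctan(X/d))^{k})$ via \cref{dfn:exp_arctan} and the denominator as $(\arctan(b/d))^{k}$, which is precisely the claimed bound in Inequality~\eqref{eq:chebyshev_arctan}. The only point meriting attention is that the denominator be nonzero so that the bound is well-defined: since $b>0$ and $d>0$, we have $\arctan(b/d)>0$, and therefore $f(b)=(\arctan(b/d))^{k}>0$. I do not anticipate any genuine obstacle here—the substantive verification was carried out in \cref{lem:arctan_ng_mono}, and this corollary merely packages that verification together with the general inequality.
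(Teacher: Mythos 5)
Your proposal is correct and matches the paper's proof exactly: the paper's proof of this corollary is simply ``This follows from \cref{thm:chebyshev} and \cref{lem:arctan_ng_mono}.'' Your writeup spells out the same specialization argument (verifying the hypotheses via \cref{lem:arctan_ng_mono}, substituting into Inequality~\eqref{eq:chebyshev}, and noting the denominator is positive) in more detail, but there is no difference in approach.
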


\begin{proof}
    This follows from \cref{thm:chebyshev} and \cref{lem:arctan_ng_mono}.
\end{proof}

Thus, the pWCET bound is defined.

\begin{dfn}\label{dfn:memik_arctan}
    For a non-negative random variable $X$, $b>0$, and $d,k>0$, the pWCET bound using the $\arctan$ function is defined as follows:
    \begin{align}
        P(X\ge b)\le\min_{d,k}\frac{E\left(\left(\arctan\frac{X}{d}\right)^{k}\right)}{\left(\arctan\frac{b}{d}\right)^{k}}\quad\mathrm{for}\quad d,k>0.\label{eq:memik_arctan}
    \end{align}
\end{dfn}

The second function is defined as $f(x)=(\tanh(x/d))^{k}$, where $d$ and $k$ are positive real numbers.
The base function $f(x) = \tanh(x)$ is illustrated in \figref{fig:func_origin} with a red line.
Since execution times are always positive, the focus is on $x \in \mathbb{R}^{+}$, as shown in \figref{fig:func_positive}.
Similar to the first function, the growth of this function also gradually slows down, but it exhibits distinct characteristics.
Here, $f(x)=(\tanh(x/d))^{k}$ is a non-negative and monotonically increasing function.

\begin{lem}\label{lem:tanh_ng_mono}
    The function $f(x)=(\tanh(x/d))^{k}$ satisfies Properties \labelcref{prop:non_negative,prop:mono_increase}, where $d,k\in\mathbb{R}^{+}$.
\end{lem}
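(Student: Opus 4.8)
The plan is to mirror the proof of \cref{lem:arctan_ng_mono}, since $\tanh$ shares the three structural features exploited there: it vanishes at the origin, it is strictly positive on $\mathbb{R}^{+}$, and it has a strictly positive derivative. Following the same order, I would establish Property~\labelcref{prop:mono_increase} first and then Property~\labelcref{prop:non_negative}.

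For monotonicity, I would differentiate $f$ via the chain rule, using $\frac{\mathrm{d}}{\mathrm{d}u}\tanh(u)=\operatorname{sech}^{2}(u)=1-\tanh^{2}(u)$, to obtain
\begin{align*}
    \frac{\mathrm{d}}{\mathrm{d}x}\left(\tanh\frac{x}{d}\right)^{k}=k\left(\tanh\frac{x}{d}\right)^{k-1}\operatorname{sech}^{2}\!\left(\frac{x}{d}\right)\frac{1}{d}.
\end{align*}
The key step is to check that every factor is strictly positive for $x,d,k\in\mathbb{R}^{+}$: on $\mathbb{R}^{+}$ we have $\tanh(x/d)\in(0,1)$, so the base of the $(k-1)$-th power is strictly positive and the power is well defined for any real $k$; the factor $\operatorname{sech}^{2}(x/d)=1-\tanh^{2}(x/d)$ is positive; and $k,1/d>0$ by hypothesis. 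Hence $f'(x)>0$, giving strict monotonicity, which in particular implies the non-strict inequality required by Property~\labelcref{prop:mono_increase}.

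For non-negativity, I would observe that $f(0)=(\tanh 0)^{k}=0$; combined with the strict monotonicity just established, this forces $f(x)>0$ for every $x\in\mathbb{R}^{+}$, which is Property~\labelcref{prop:non_negative}.

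I do not anticipate a genuine obstacle. The only point meriting mild care is that $\tanh$ maps $\mathbb{R}^{+}$ into $(0,1)$ rather than onto the whole positive axis, so the $(k-1)$-th power must be read as a power of a strictly positive base; once this is noted, positivity of the derivative holds uniformly whether $k<1$, $k=1$, or $k>1$, and no case distinction is needed.
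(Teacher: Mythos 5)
Your proof is correct and takes essentially the same route as the paper: the paper omits this proof, stating only that it is similar to that of \cref{lem:arctan_ng_mono}, and your argument is precisely that analogue, with the factor $\operatorname{sech}^{2}(x/d)\cdot\frac{1}{d}$ playing the role of $\frac{1}{1+(x/d)^{2}}\cdot\frac{1}{d}$. Your explicit note that Property~\labelcref{prop:non_negative} follows from $f(0)=0$ \emph{combined with} strict monotonicity is, if anything, slightly more careful than the wording in the paper's arctan proof.
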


\begin{proof}
    The proof is similar to that of \cref{lem:arctan_ng_mono} and is omitted for brevity.
\end{proof}

Since $f(x)=(\tanh(x/d))^{k}$ is a non-negative and monotonically increasing function, it can be incorporated into Chebyshev's inequality.

\begin{dfn}\label{dfn:exp_tanh}
    Let $X$ be a discrete random variable, and $d$ and $k$ be positive real values.
    The expected value of $(\tanh(X/d))^{k}$ is given by:
    \begin{align}
        E\left(\left(\tanh\frac{X}{d}\right)^{k}\right)=\sum_{x}\left(\tanh\frac{x}{d}\right)^{k}\cdot P(X=x).\label{eq:exp_tanh}
    \end{align}
\end{dfn}

\begin{crl}\label{crl:chebyshev_tanh}
    For a non-negative random variable $X$, $b>0$, and the function $f(X)=(\tanh(X/d))^{k}$, the following inequality yields:
    \begin{align}
        P(X\ge b)\le\frac{E\left(\left(\tanh\frac{X}{d}\right)^{k}\right)}{\left(\tanh\frac{b}{d}\right)^{k}}.\label{eq:chebyshev_tanh}
    \end{align}
\end{crl}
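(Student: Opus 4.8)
The plan is to obtain \cref{crl:chebyshev_tanh} as an immediate instance of the general bound in \cref{thm:chebyshev}, exactly mirroring the argument already used for \cref{crl:chebyshev_arctan}. The essential observation is that \cref{thm:chebyshev} applies to \emph{any} function $f$ that is non-negative and monotonically increasing on the relevant domain, and that \cref{lem:tanh_ng_mono} has already certified precisely these two properties for $f(x)=(\tanh(x/d))^{k}$ with $d,k\in\mathbb{R}^{+}$. Thus all the substantive work has been pushed into the preceding lemma, and the corollary is a routine specialization.

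Concretely, I would first invoke \cref{lem:tanh_ng_mono} to record that $f(X)=(\tanh(X/d))^{k}$ satisfies Properties~\labelcref{prop:non_negative,prop:mono_increase}, so it is an admissible choice of $f$ in \cref{thm:chebyshev}. I would then substitute this $f$ into Inequality~\labelcref{eq:chebyshev}, replacing the generic numerator $E(f(X))$ by $E((\tanh(X/d))^{k})$ (as defined in \cref{dfn:exp_tanh}) and the denominator $f(b)$ by $(\tanh(b/d))^{k}$. This yields Inequality~\labelcref{eq:chebyshev_tanh} directly.

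There is essentially no hard step here; the only point meriting a moment of care is that the denominator must be strictly positive for the ratio to be well-defined. This is guaranteed because $b>0$ implies $\tanh(b/d)>0$ and hence $(\tanh(b/d))^{k}>0$, which is already subsumed by the positivity established in \cref{lem:tanh_ng_mono}. Accordingly, the proof can be stated in a single line---``This follows from \cref{thm:chebyshev} and \cref{lem:tanh_ng_mono}''---just as was done for the $\arctan$ counterpart in \cref{crl:chebyshev_arctan}.
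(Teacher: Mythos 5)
Your proposal is correct and matches the paper's proof exactly: the paper likewise derives \cref{crl:chebyshev_tanh} in one line by applying \cref{thm:chebyshev} with the admissibility of $f(x)=(\tanh(x/d))^{k}$ certified by \cref{lem:tanh_ng_mono}. Your added remark on the strict positivity of the denominator is a harmless (and sound) elaboration beyond what the paper states.
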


\begin{proof}
    This follows from \cref{thm:chebyshev} and \cref{lem:tanh_ng_mono}.
\end{proof}

Therefore, the pWCET bound is defined.

\begin{dfn}\label{dfn:memik_tanh}
    For a non-negative random variable $X$, $b>0$, and $d,k>0$, the pWCET bound using the $\tanh$ function is defined as follows:
    \begin{align}
        P(X\ge b)\le\min_{d,k}\frac{E\left(\left(\tanh\frac{X}{d}\right)^{k}\right)}{\left(\tanh\frac{b}{d}\right)^{k}}\quad\mathrm{for}\quad d,k>0.\label{eq:memik_tanh}
    \end{align}
\end{dfn}

As is evident from \figref{fig:func_positive}, both functions have an approximately linear interval.
If the parameters allow the execution time to fall within this interval, the proposed functions behave similarly to $f(x)=x^k$.
Thus, even for light-tailed distributions, the proposed method is capable of constructing tight bounds.
Additionally, since both $\arctan(x/d)^k$ and $\tanh(x/d)^k$ are non-negative and monotonically increasing, the use of these functions in Chebyshev’s inequality guarantees that the derived bounds are theoretically safe.

\section{Evaluation}\label{sec:evaluation}

The effectiveness of the proposed method is evaluated in this section through two complementary experiments: one using synthetic data and the other using real-world execution time data collected from an autonomous driving system, Autoware~\cite{autoware}.
The synthetic data experiment is designed to assess the general applicability of the method across a wide range of probability distributions, including those with heavy tails or multimodal characteristics.
This controlled setting allows us to compare the tightness and safety of our method against existing approaches under known ground truth.
Following this, we apply the method to real measurement data, which exhibits characteristics different from idealized synthetic distributions, to examine its practicality in actual embedded real-time systems.
Throughout the evaluation, a comparative analysis with existing methods is included to highlight the advantages and limitations of each.

\subsection{Synthetic Data}\label{ssub:eval_synthetic}

\begin{table}[tb]
    \centering
    \caption{Distributions used for the evaluation and their respective parameters}
    \label{tab:synthetic_params}
    \renewcommand{\arraystretch}{1.1}
    \begin{tabular}{l|ll}\hline\hline
        \textbf{Acronym} & \textbf{Type} & \textbf{Parameters}\\ \hline
        GaussianA & Gaussian & $\mu=100, \sigma=10$ \\
        GaussianB & Gaussian & $\mu=100, \sigma=50$ \\ \hline
        WeibullA & Weibull & $\alpha=4, \lambda=80$ \\
        WeibullB & Weibull & $\alpha=8, \lambda=80$ \\ \hline
        BetaA & Beta & $\alpha=8, \beta=0.25$ \\
        BetaB & Beta & $\alpha=8, \beta=0.125$ \\ \hline
        GammaA & Gamma & $\alpha=100, \lambda=1$ \\
        GammaB & Gamma & $\alpha=150, \lambda=1$ \\ \hline
        MixtureA & Mixture of Gaussians & $\mu=\{5, 50, 100\}, \sigma=10,$ \\
        & & $w=\{0.6, 0.39, 0.01\}$ \\
        MixtureB & Mixture of Gaussians & $\mu=\{50, 100, 400\}, \sigma=50,$ \\
        & & $w=\{0.6, 0.39, 0.01\}$ \\ \hline
        MixtureC & Mixture of Weibulls & $\lambda=\{5, 50, 100\}, \alpha=4,$ \\
        & & $w=\{0.6, 0.39, 0.01\}$ \\
        MixtureD & Mixture of Weibulls & $\lambda=\{5, 50, 100\}, \alpha=8,$ \\
        & & $w=\{0.6, 0.39, 0.01\}$ \\ \hline
    \end{tabular}
    \vspace{-3mm}
\end{table}

The general effectiveness of the proposed method is examined through experiments using synthetic data.
Following prior work, the target distributions include Gaussian, Weibull, Beta, Gamma, and mixture distributions of them, covering a broad range of tail behaviors and distributional shapes.
The evaluation metric is tightness, defined as the ratio between the estimated execution time at a specified exceedance probability and the corresponding true quantile of the underlying distribution.
That is, tightness values greater than $1.0$ and close to $1.0$ are desirable.
A tightness value below $1.0$ indicates an underestimation.
This metric reflects the degree of conservativeness in the estimation while ensuring safety.

The types and parameters of the distributions used in the evaluation are summarized in \tabref{tab:synthetic_params}.
For mixture distributions, the parameter $w$ represents the weight of each component.
The parameter settings are based on those used in prior work.
For all non-mixture distributions, the cumulative distribution function (CDF) has a closed-form inverse, allowing the quantiles to be computed analytically; these values are treated as ground truth.
In contrast, mixture distributions lack a closed-form inverse for their CDFs, and thus quantiles cannot be computed directly.
Therefore, numerical root-finding using the bisection method is employed to determine the quantiles, which are then used as the ground truth.
The sample size is fixed at $n = 10^6$.

Three methods are compared:
\begin{itemize}
    \item \textbf{MEMIK}~\cite{2022_ECRTS}: a prior method based on Markov's inequality
    \item \textbf{ATAN}: the proposed method using the $\arctan$ function
    \item \textbf{TANH}: the proposed method using the $\tanh$ function
\end{itemize}
In Ref.~\cite{2022_ECRTS}, MEMIK is shown to outperform other methods, including those based on EVT.
For each method, WCET is estimated at target exceedance probabilities of $10^{-7},\ 10^{-8},\ 10^{-9},\ 10^{-10},\ 10^{-11},\ 10^{-12},\ 10^{-13},\ 10^{-14},\ 10^{-15}$, and the corresponding tightness is evaluated.
To ensure fairness, all methods use the same sample set.

The estimation results for each distribution are shown in \figsrange*{fig:eval1_gaussianA}{fig:eval1_mixtureD}{fig:eval1_gaussianB,fig:eval1_weibullA,fig:eval1_weibullB,fig:eval1_betaA,fig:eval1_betaB,fig:eval1_gammaA,fig:eval1_gammaB,fig:eval1_mixtureA,fig:eval1_mixtureB,fig:eval1_mixtureC}.
The results for the Gaussian distributions are presented in \figslist{fig:eval1_gaussianA,fig:eval1_gaussianB}, and it can be observed that the proposed methods improve the estimates under both parameter settings.
Notably, the improvement becomes more pronounced as the target probability decreases.
One interesting observation is that, for GaussianA, ATAN and TANH yield almost identical results, while MEMIK produces significantly larger estimates.
In contrast, for GaussianB, MEMIK and ATAN produce similar results, while TANH yields smaller estimates.
These differences suggest that the estimation results vary depending on the parameters of the distribution, even within the same distribution type, due to the different curvature characteristics of the saturating functions used in the proposed methods.

The results for the Weibull distributions are shown in \figslist{fig:eval1_weibullA,fig:eval1_weibullB}.
As with the Gaussian case, the proposed methods improve the estimates, with a particularly large margin for WeibullA.
To prevent underestimation, every method applies the restricting-$k$ algorithm~\cite{2022_ECRTS} to Inequalities \labelcref{eq:memik,eq:memik_arctan,eq:memik_tanh}, but this safeguard introduces additional pessimism at lower probabilities.
The proposed approaches alleviate this side effect and yield noticeably tighter bounds.
For WeibullB, MEMIK and ATAN produce nearly identical estimates, underscoring once again that the behavior depends on the specific parameter settings of the distribution.

\begin{figure}[tb]
    \centering
    \begin{minipage}[t]{0.45\linewidth}\vspace{0pt}
        \centering
        \includegraphics[width=\linewidth]{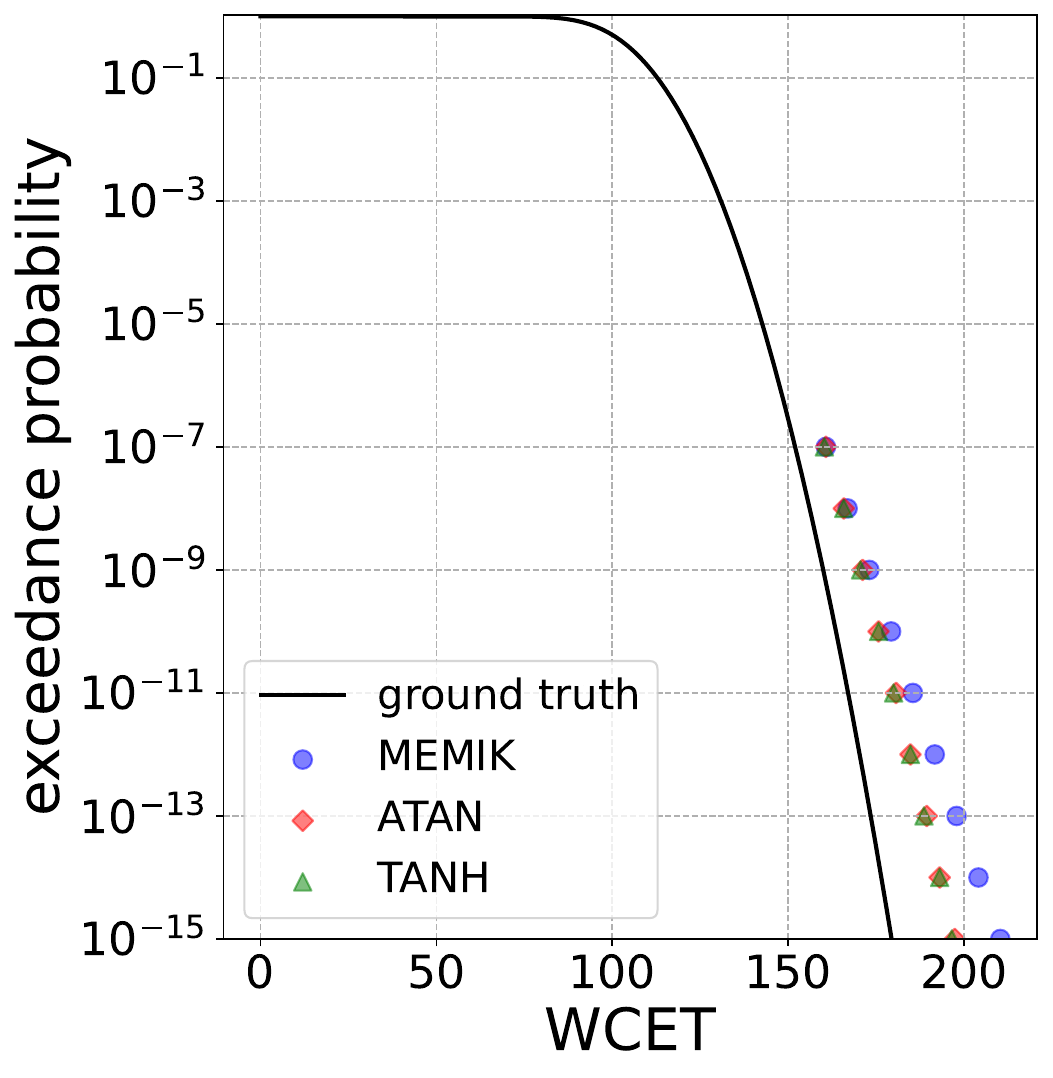}
        \subcaption{Estimated WCET values for GaussianA}
        \label{fig:eval1_gaussianA_ccdf}
    \end{minipage}\hfill
    \begin{minipage}[t]{0.45\linewidth}\vspace{0pt}
        \centering
        \includegraphics[width=\linewidth]{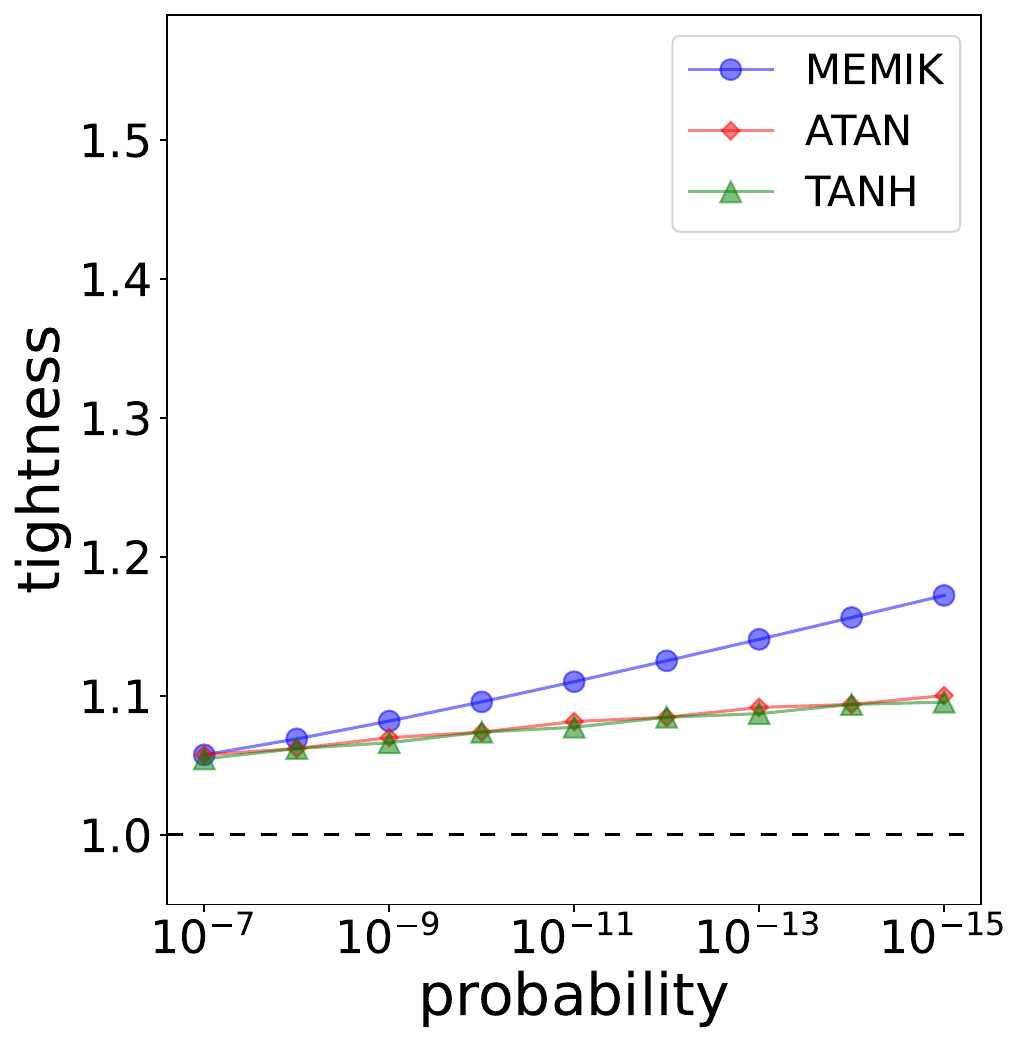}
        \subcaption{Tightness for GaussianA}
        \label{fig:eval1_gaussianA_tightness}
    \end{minipage}
    \caption{Evaluation result for GaussianA}
    \label{fig:eval1_gaussianA}
\end{figure}
\vspace{4mm}
\begin{figure}[tb]
    \centering
    \begin{minipage}[t]{0.45\linewidth}\vspace{0pt}
        \centering
        \includegraphics[width=\linewidth]{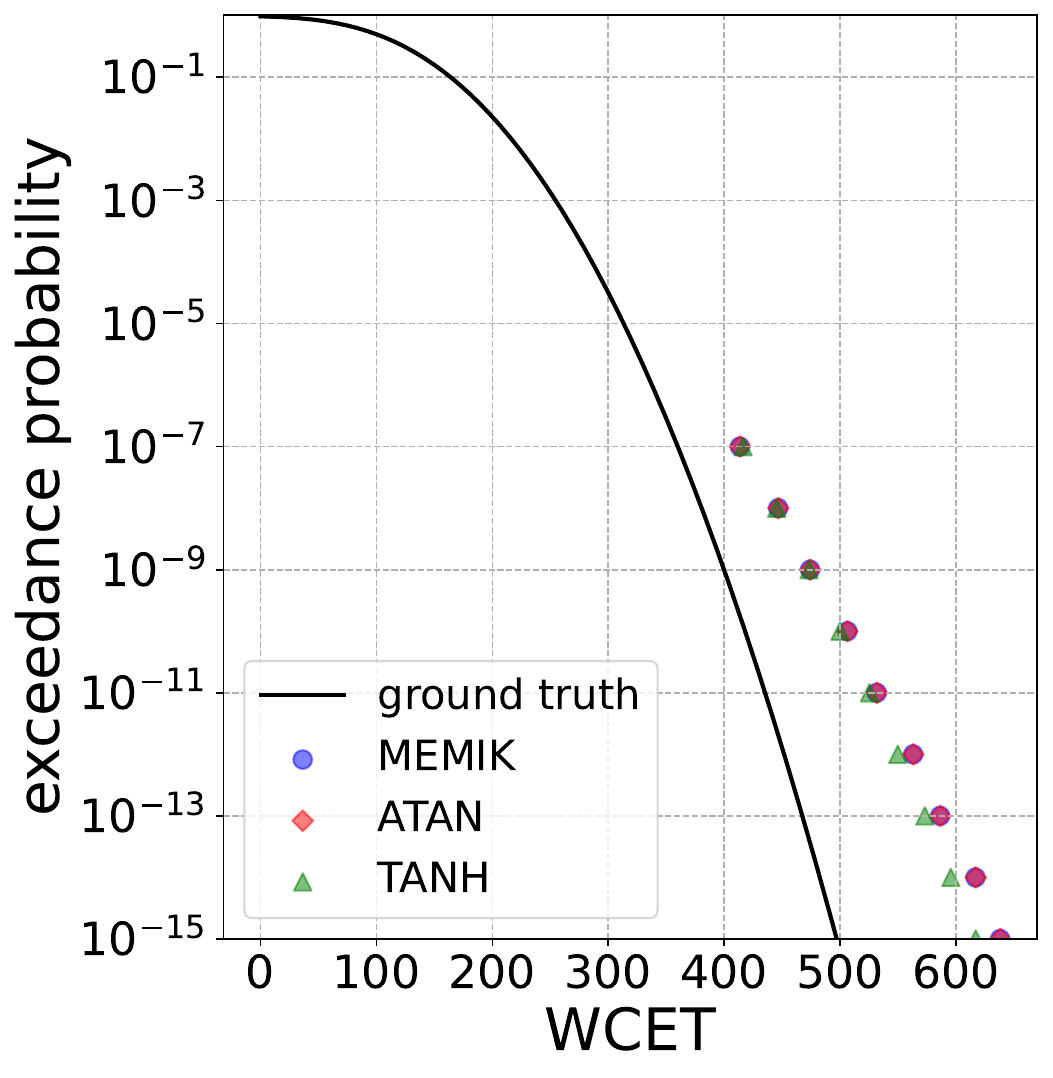}
        \subcaption{Estimated WCET values for GaussianB}
        \label{fig:eval1_gaussianB_ccdf}
    \end{minipage}\hfill
    \begin{minipage}[t]{0.45\linewidth}\vspace{0pt}
        \centering
        \includegraphics[width=\linewidth]{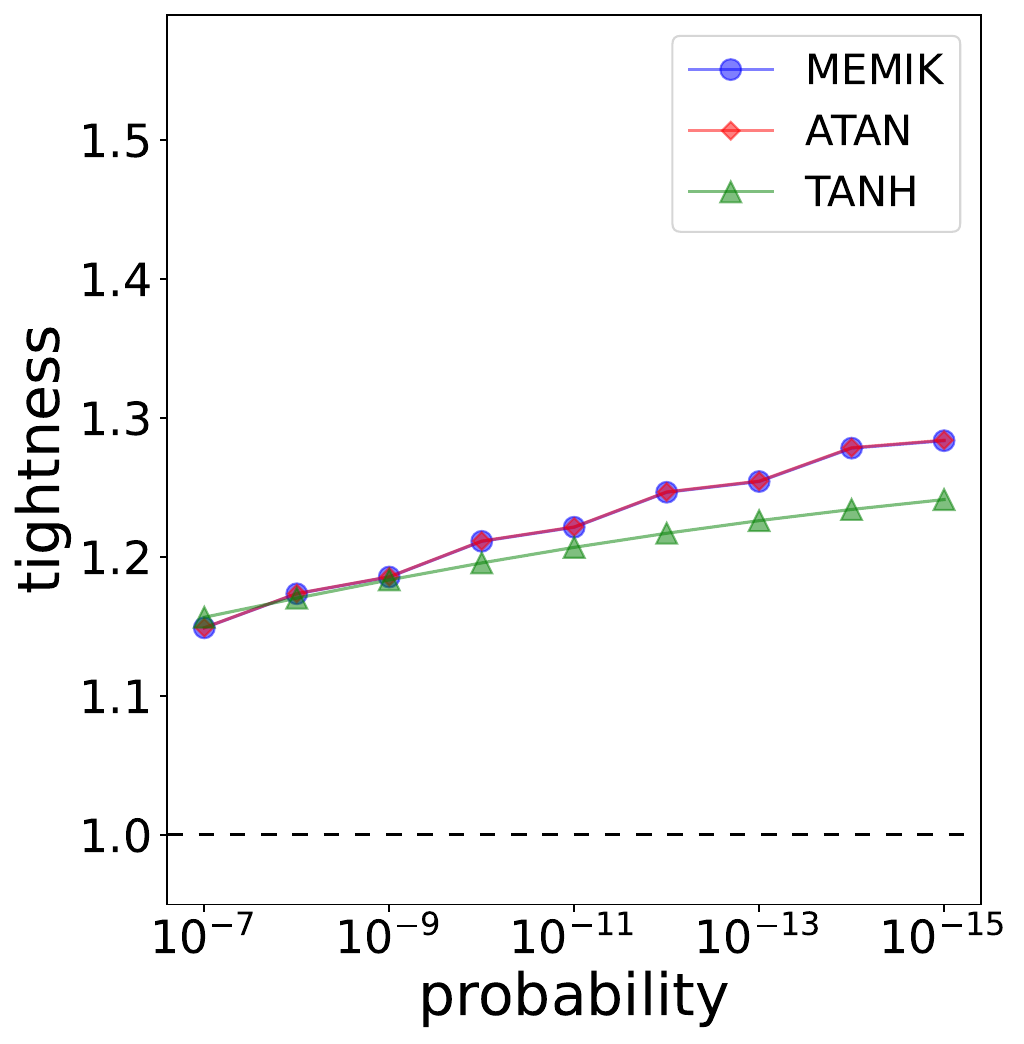}
        \subcaption{Tightness for GaussianB}
        \label{fig:eval1_gaussianB_tightness}
    \end{minipage}
    \caption{Evaluation result for GaussianB}
    \label{fig:eval1_gaussianB}
\vspace{4mm}
\end{figure}
\begin{figure}[tb]
    \centering
    \begin{minipage}[t]{0.45\linewidth}\vspace{0pt}
        \centering
        \includegraphics[width=\linewidth]{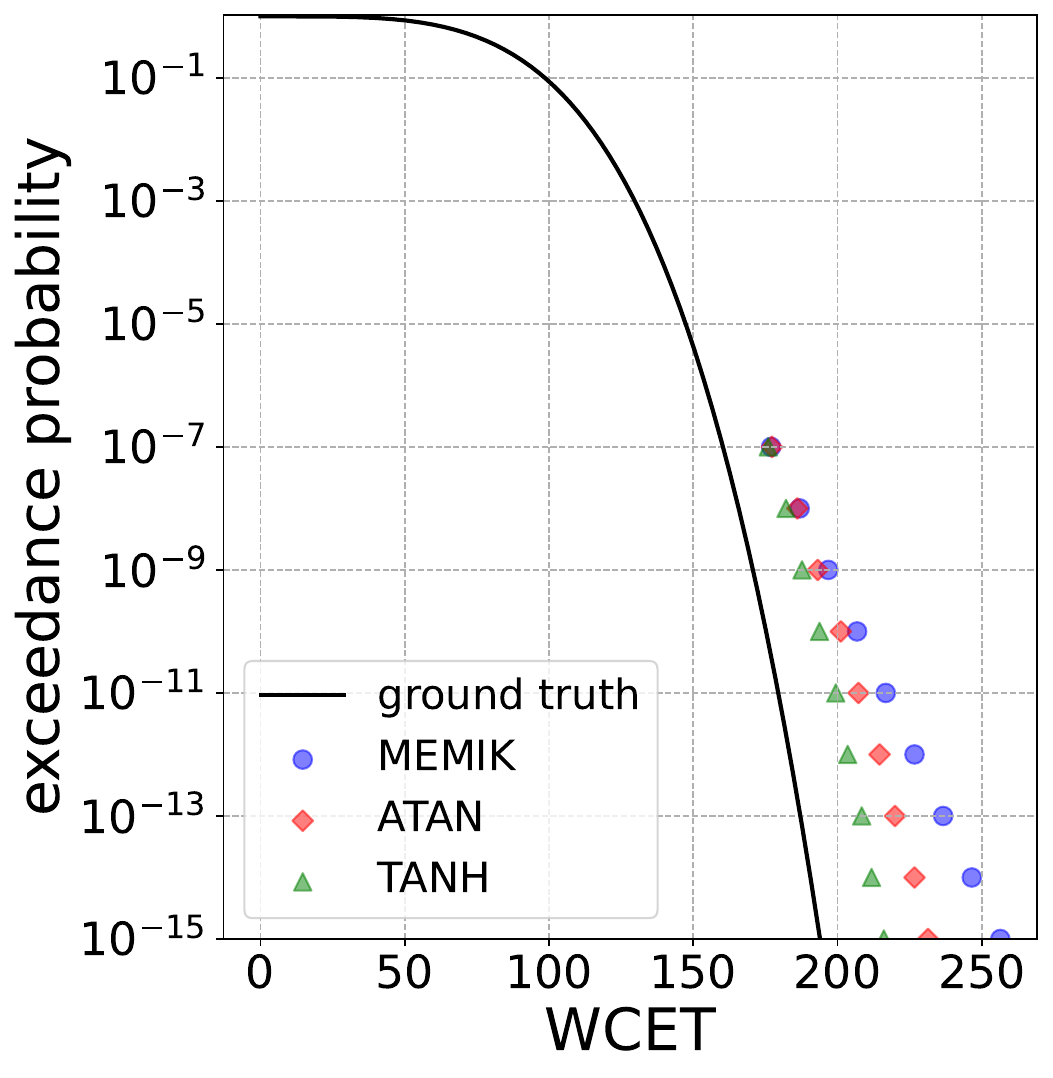}
        \subcaption{Estimated WCET values for WeibullA}
        \label{fig:eval1_weibullA_ccdf}
    \end{minipage}\hfill
    \begin{minipage}[t]{0.45\linewidth}\vspace{0pt}
        \centering
        \includegraphics[width=\linewidth]{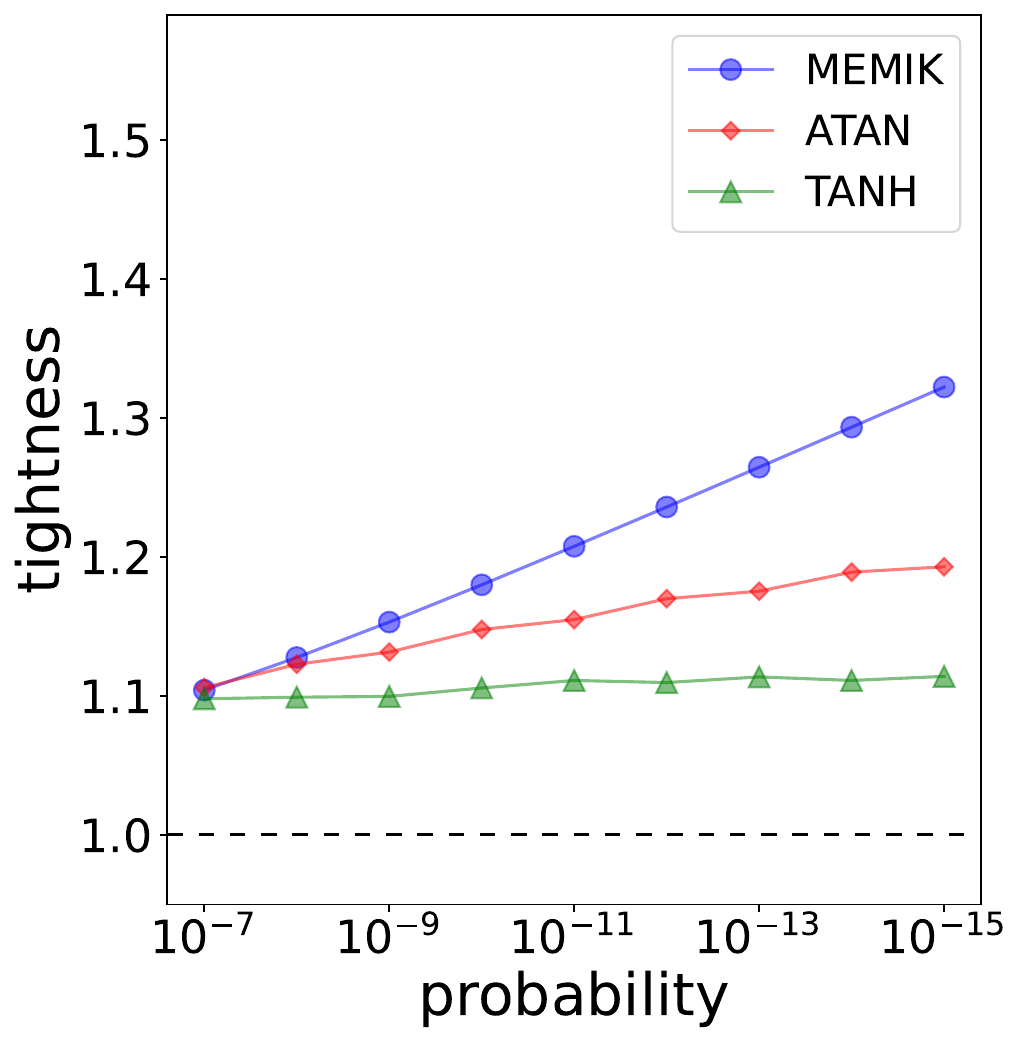}
        \subcaption{Tightness for WeibullA}
        \label{fig:eval1_weibullA_tightness}
    \end{minipage}
    \caption{Evaluation result for WeibullA}
    \label{fig:eval1_weibullA}
\vspace{4mm}
\end{figure}
\begin{figure}[tb]
    \centering
    \begin{minipage}[t]{0.45\linewidth}\vspace{0pt}
        \centering
        \includegraphics[width=\linewidth]{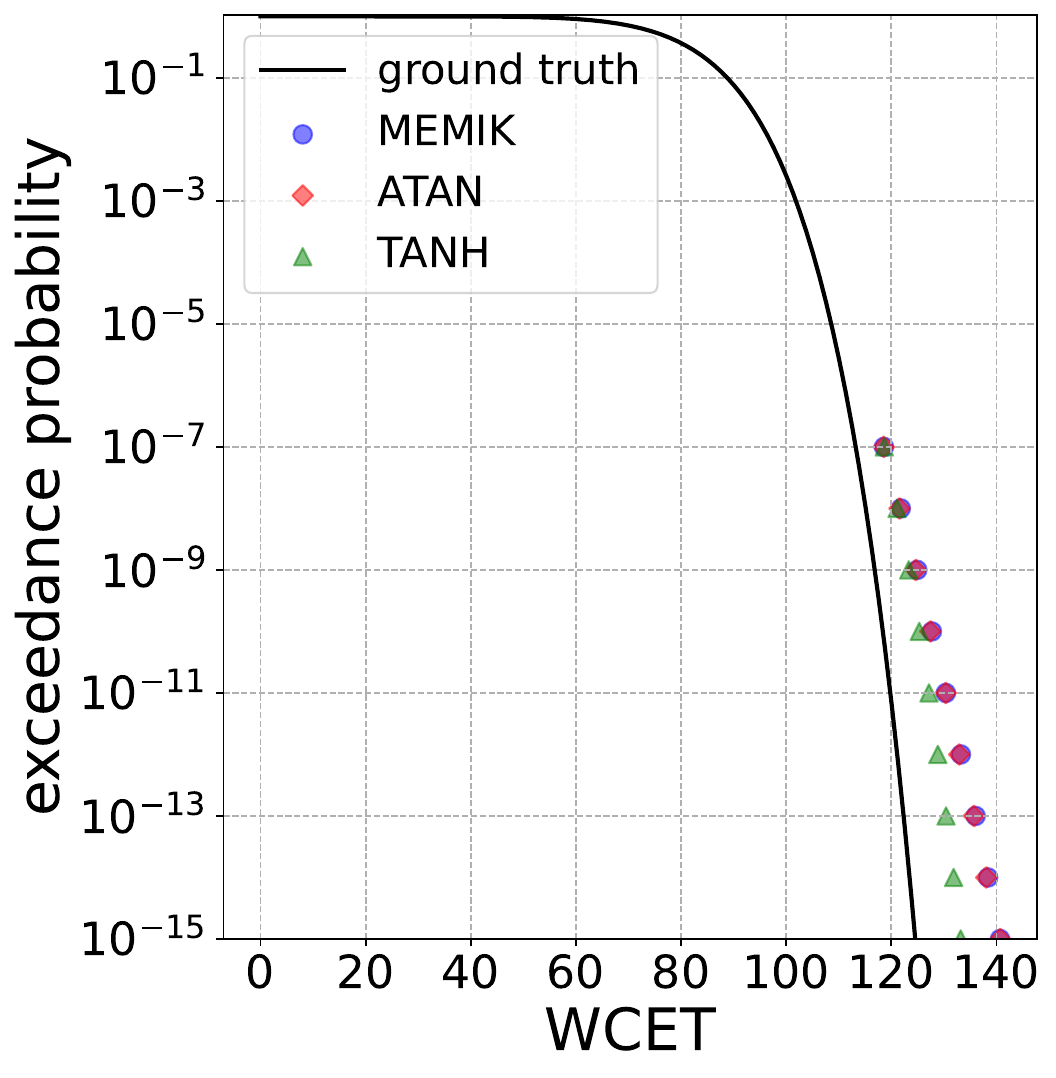}
        \subcaption{Estimated WCET values for WeibullB}
        \label{fig:eval1_weibullB_ccdf}
    \end{minipage}\hfill
    \begin{minipage}[t]{0.45\linewidth}\vspace{0pt}
        \centering
        \includegraphics[width=\linewidth]{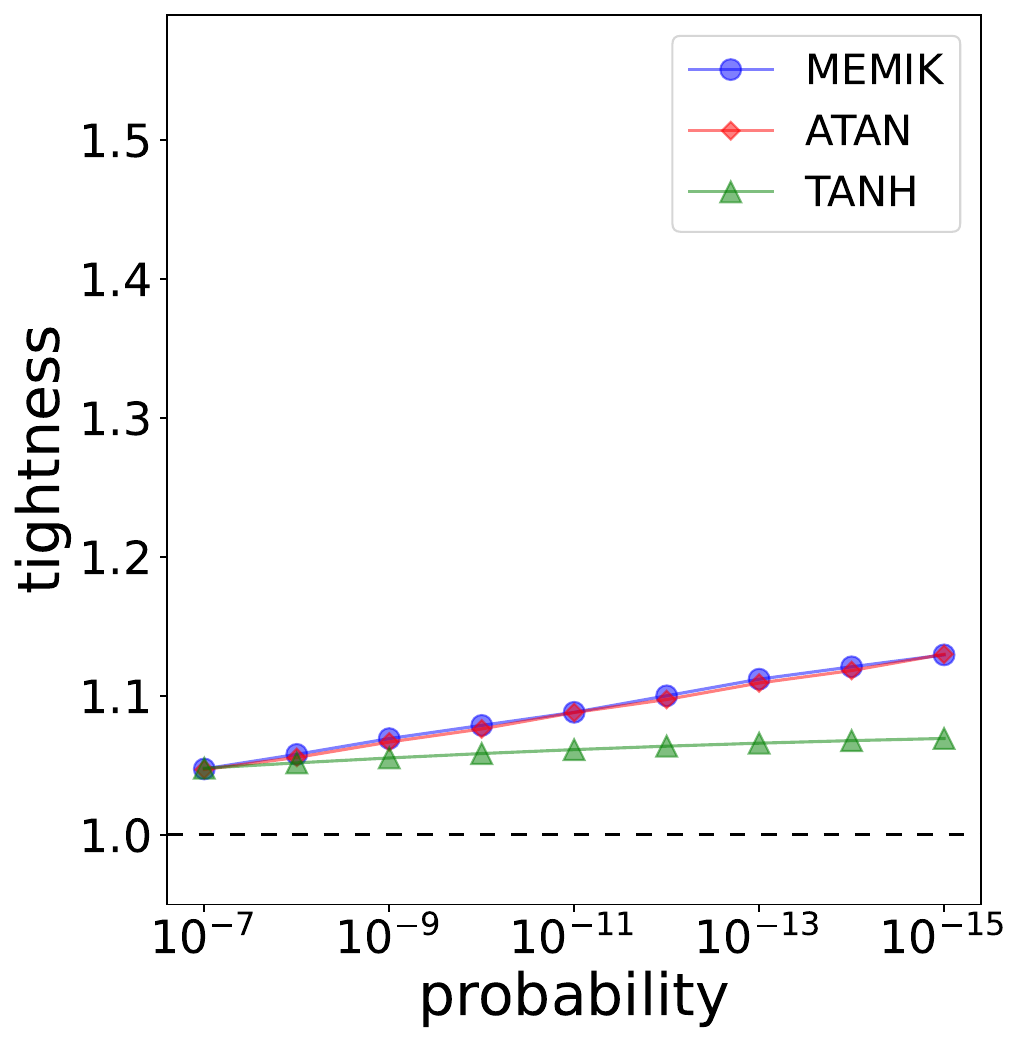}
        \subcaption{Tightness for WeibullB}
        \label{fig:eval1_weibullB_tightness}
    \end{minipage}
    \caption{Evaluation result for WeibullB}
    \label{fig:eval1_weibullB}
\vspace{4mm}
\end{figure}
\begin{figure}[tb]
    \centering
    \begin{minipage}[t]{0.45\linewidth}\vspace{0pt}
        \centering
        \includegraphics[width=\linewidth]{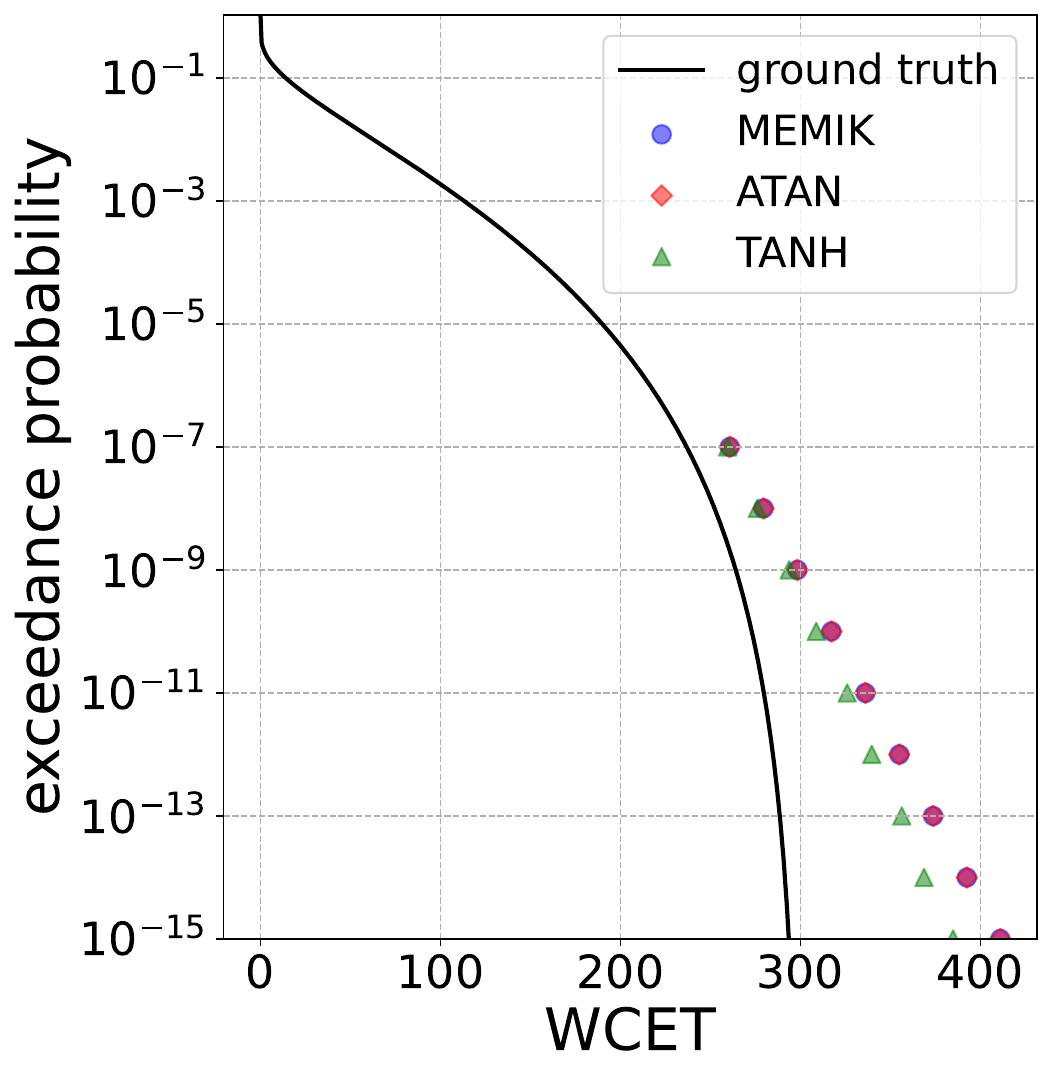}
        \subcaption{Estimated WCET values for BetaA}
        \label{fig:eval1_betaA_ccdf}
    \end{minipage}\hfill
    \begin{minipage}[t]{0.45\linewidth}\vspace{0pt}
        \centering
        \includegraphics[width=\linewidth]{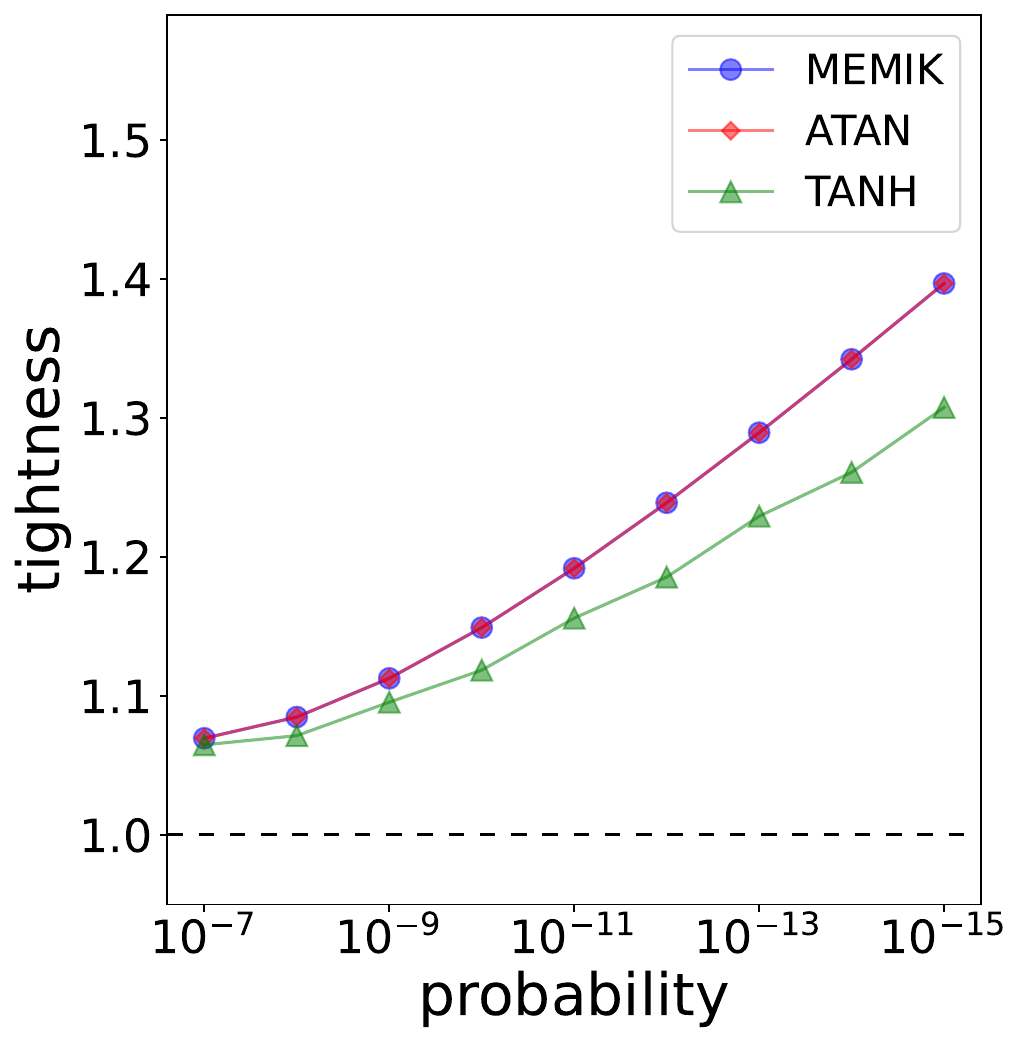}
        \subcaption{Tightness for BetaA}
        \label{fig:eval1_betaA_tightness}
    \end{minipage}
    \caption{Evaluation result for BetaA}
    \label{fig:eval1_betaA}
\vspace{4mm}
\end{figure}
\begin{figure}[tb]
    \centering
    \begin{minipage}[t]{0.45\linewidth}\vspace{0pt}
        \centering
        \includegraphics[width=\linewidth]{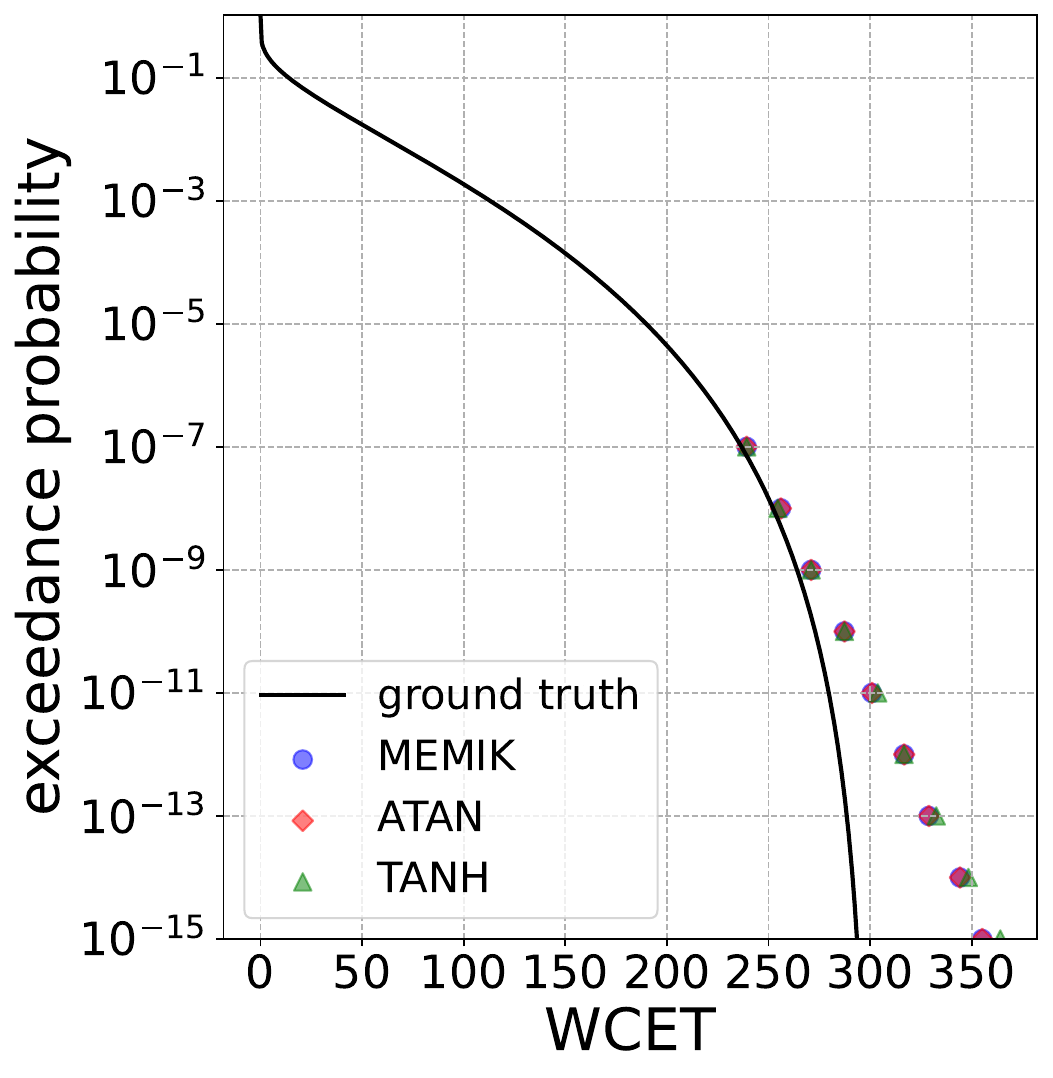}
        \subcaption{Estimated WCET values for BetaB}
        \label{fig:eval1_betaB_ccdf}
    \end{minipage}\hfill
    \begin{minipage}[t]{0.45\linewidth}\vspace{0pt}
        \centering
        \includegraphics[width=\linewidth]{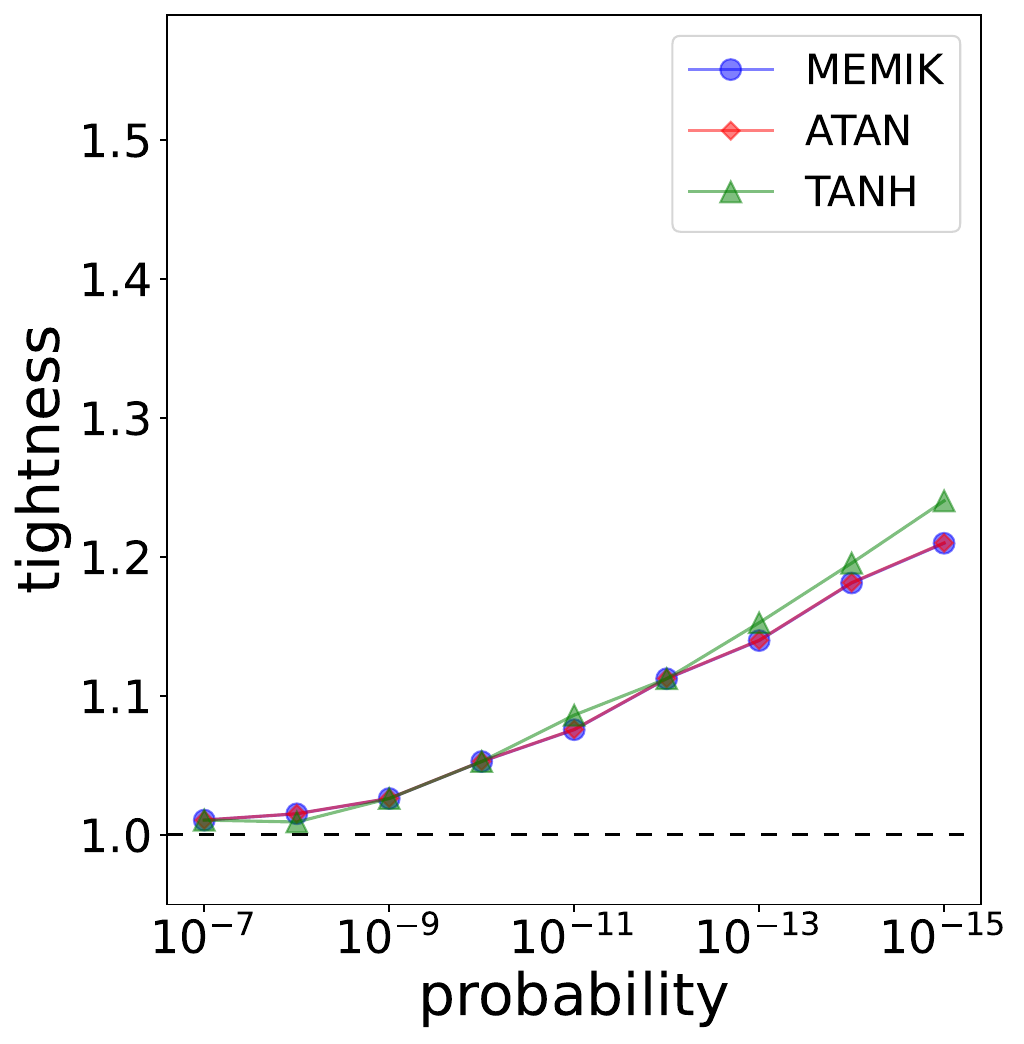}
        \subcaption{Tightness for BetaB}
        \label{fig:eval1_betaB_tightness}
    \end{minipage}
    \caption{Evaluation result for BetaB}
    \label{fig:eval1_betaB}
\vspace{4mm}
\end{figure}
\begin{figure}[tb]
    \centering
    \begin{minipage}[t]{0.45\linewidth}\vspace{0pt}
        \centering
        \includegraphics[width=\linewidth]{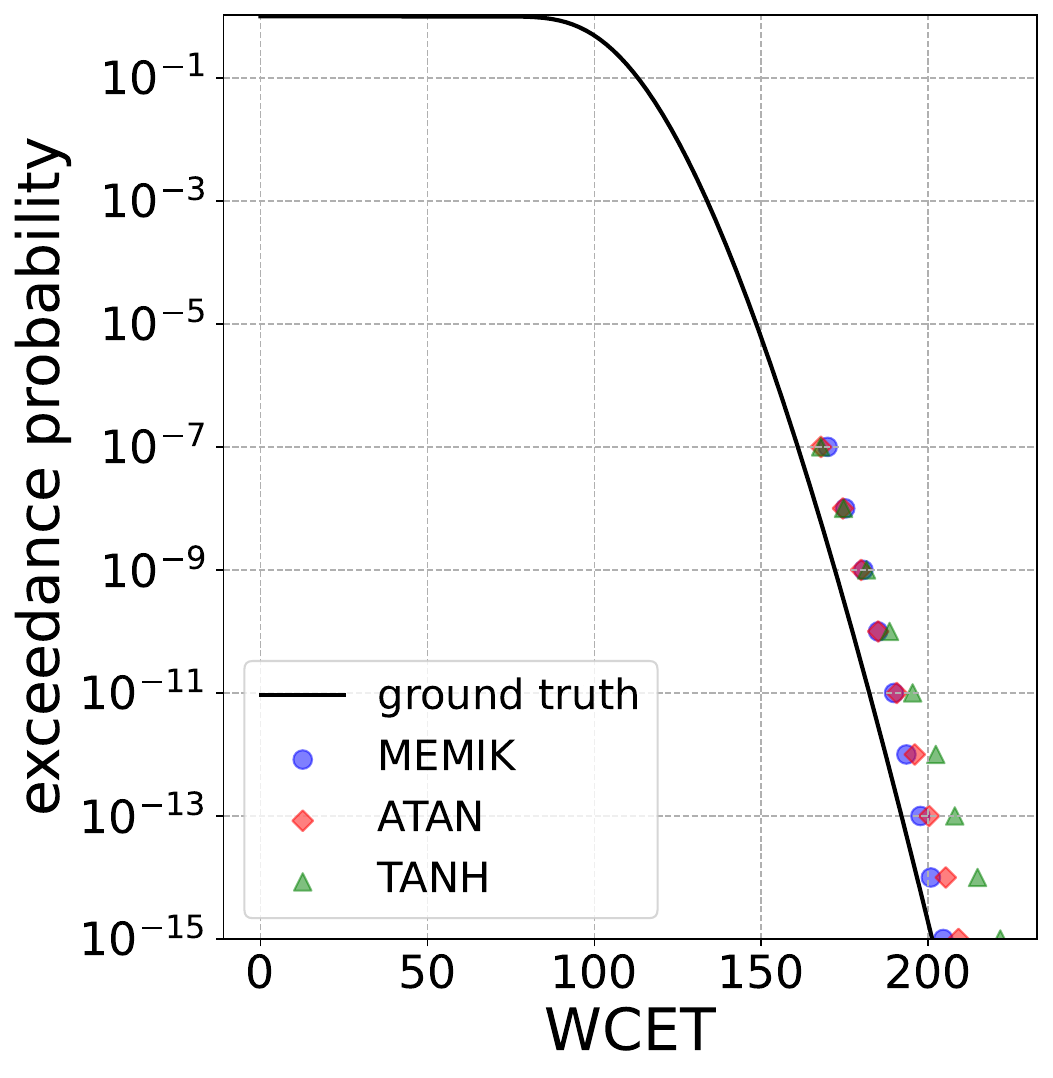}
        \subcaption{Estimated WCET values for GammaA}
        \label{fig:eval1_gammaA_ccdf}
    \end{minipage}\hfill
    \begin{minipage}[t]{0.45\linewidth}\vspace{0pt}
        \centering
        \includegraphics[width=\linewidth]{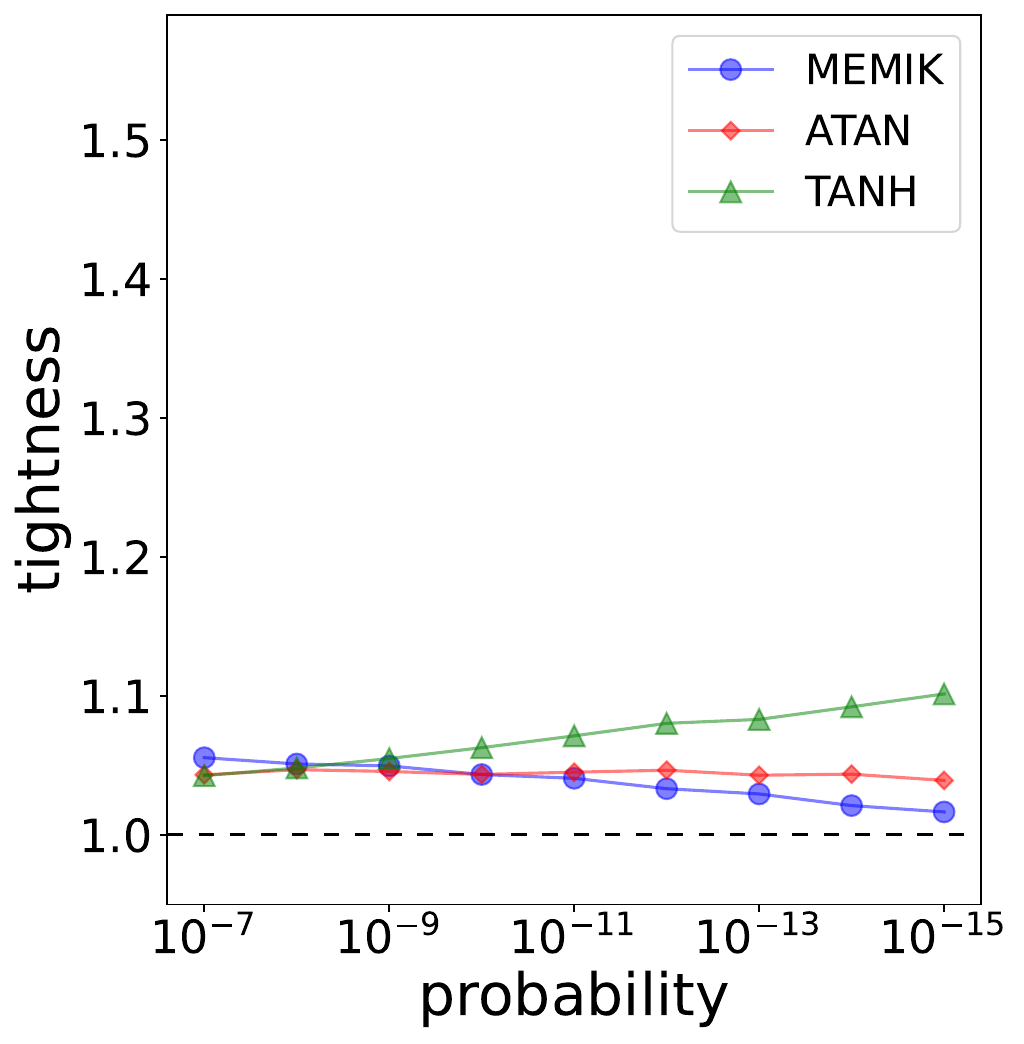}
        \subcaption{Tightness for GammaA}
        \label{fig:eval1_gammaA_tightness}
    \end{minipage}
    \caption{Evaluation result for GammaA}
    \label{fig:eval1_gammaA}
\vspace{4mm}
\end{figure}
\begin{figure}[tb]
    \centering
    \begin{minipage}[t]{0.45\linewidth}\vspace{0pt}
        \centering
        \includegraphics[width=\linewidth]{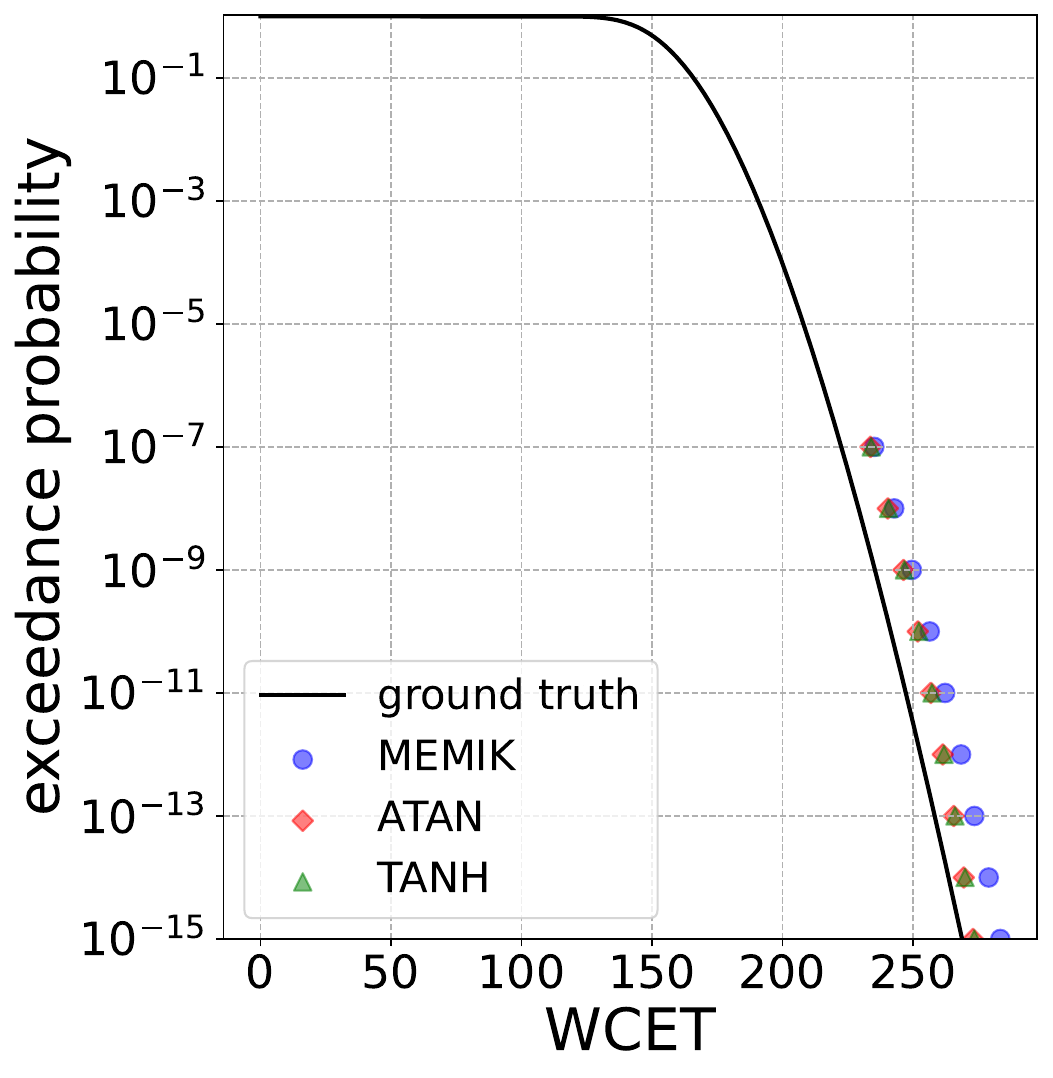}
        \subcaption{Estimated WCET values for GammaB}
        \label{fig:eval1_gammaB_ccdf}
    \end{minipage}\hfill
    \begin{minipage}[t]{0.45\linewidth}\vspace{0pt}
        \centering
        \includegraphics[width=\linewidth]{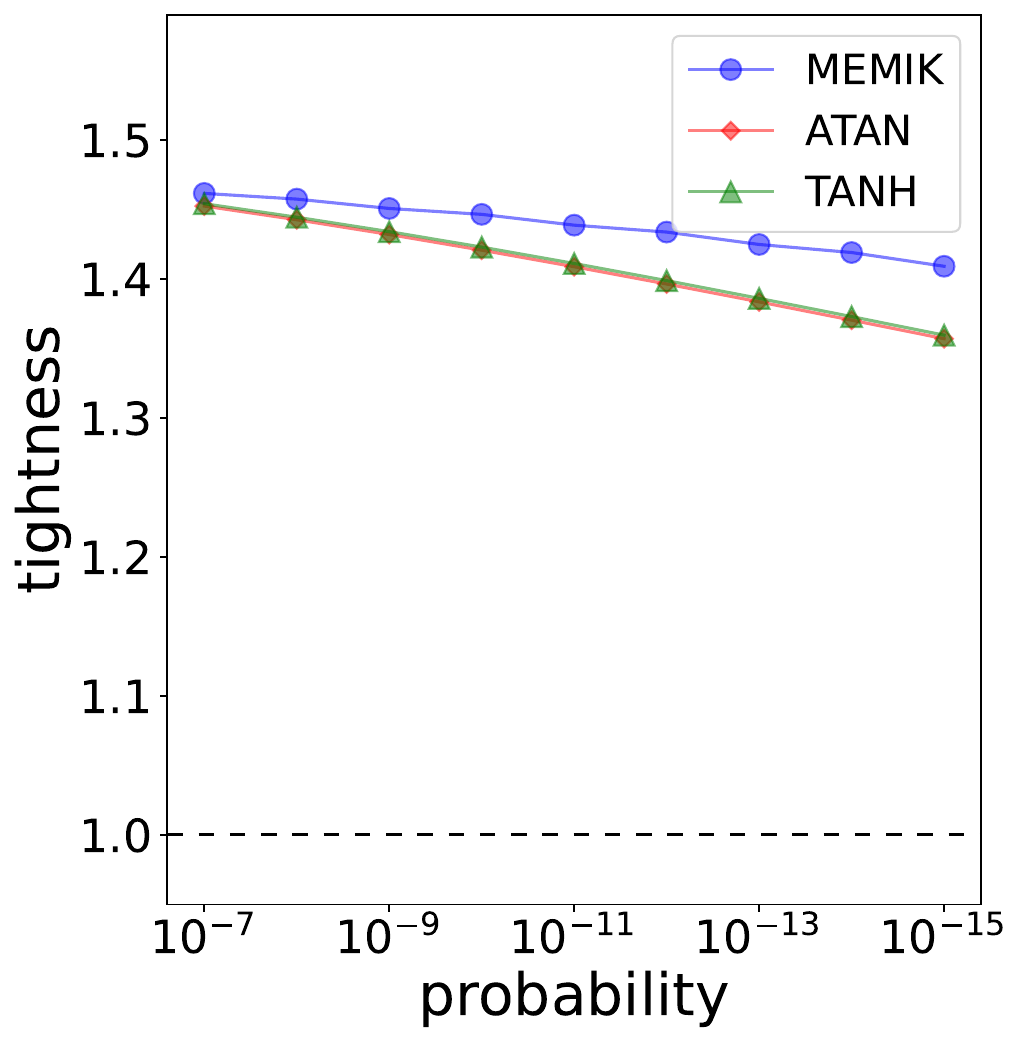}
        \subcaption{Tightness for GammaB}
        \label{fig:eval1_gammaB_tightness}
    \end{minipage}
    \caption{Evaluation result for GammaB}
    \label{fig:eval1_gammaB}
\vspace{4mm}
\end{figure}
\begin{figure}[tb]
    \centering
    \begin{minipage}[t]{0.45\linewidth}\vspace{0pt}
        \centering
        \includegraphics[width=\linewidth]{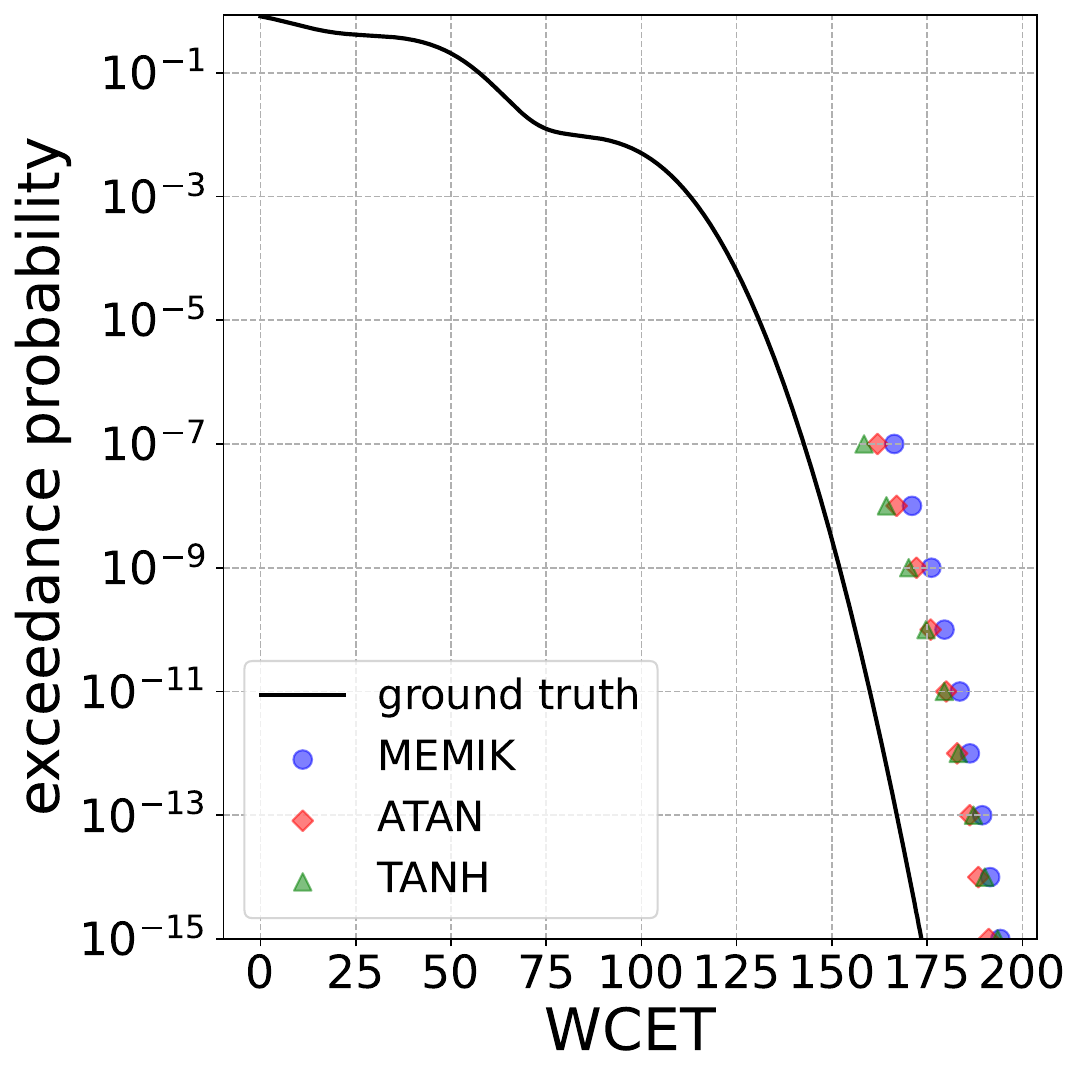}
        \subcaption{Estimated WCET values for MixtureA}
        \label{fig:eval1_mixtureA_ccdf}
    \end{minipage}\hfill
    \begin{minipage}[t]{0.45\linewidth}\vspace{0pt}
        \centering
        \includegraphics[width=\linewidth]{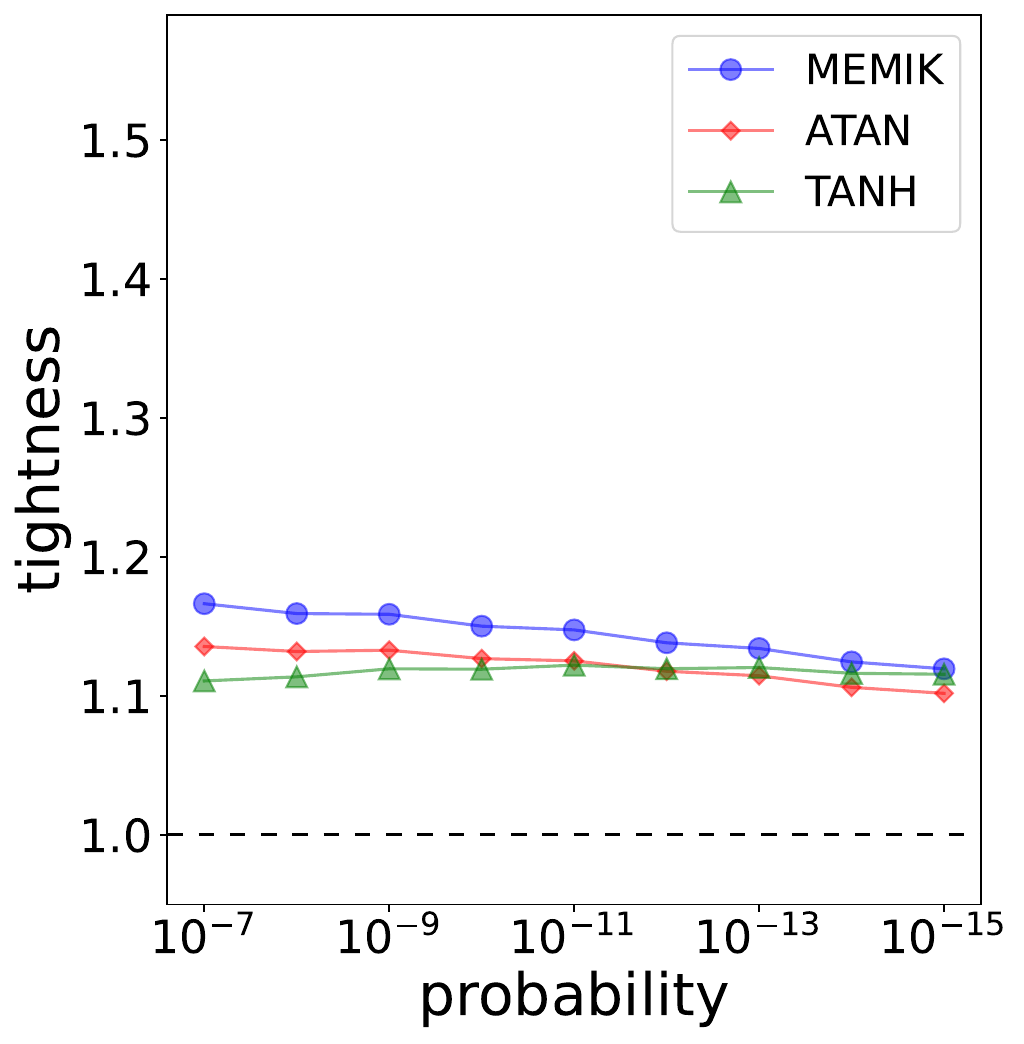}
        \subcaption{Tightness for MixtureA}
        \label{fig:eval1_mixtureA_tightness}
    \end{minipage}
    \caption{Evaluation result for MixtureA}
    \label{fig:eval1_mixtureA}
\vspace{4mm}
\end{figure}
\begin{figure}[tb]
    \centering
    \begin{minipage}[t]{0.45\linewidth}\vspace{0pt}
        \centering
        \includegraphics[width=\linewidth]{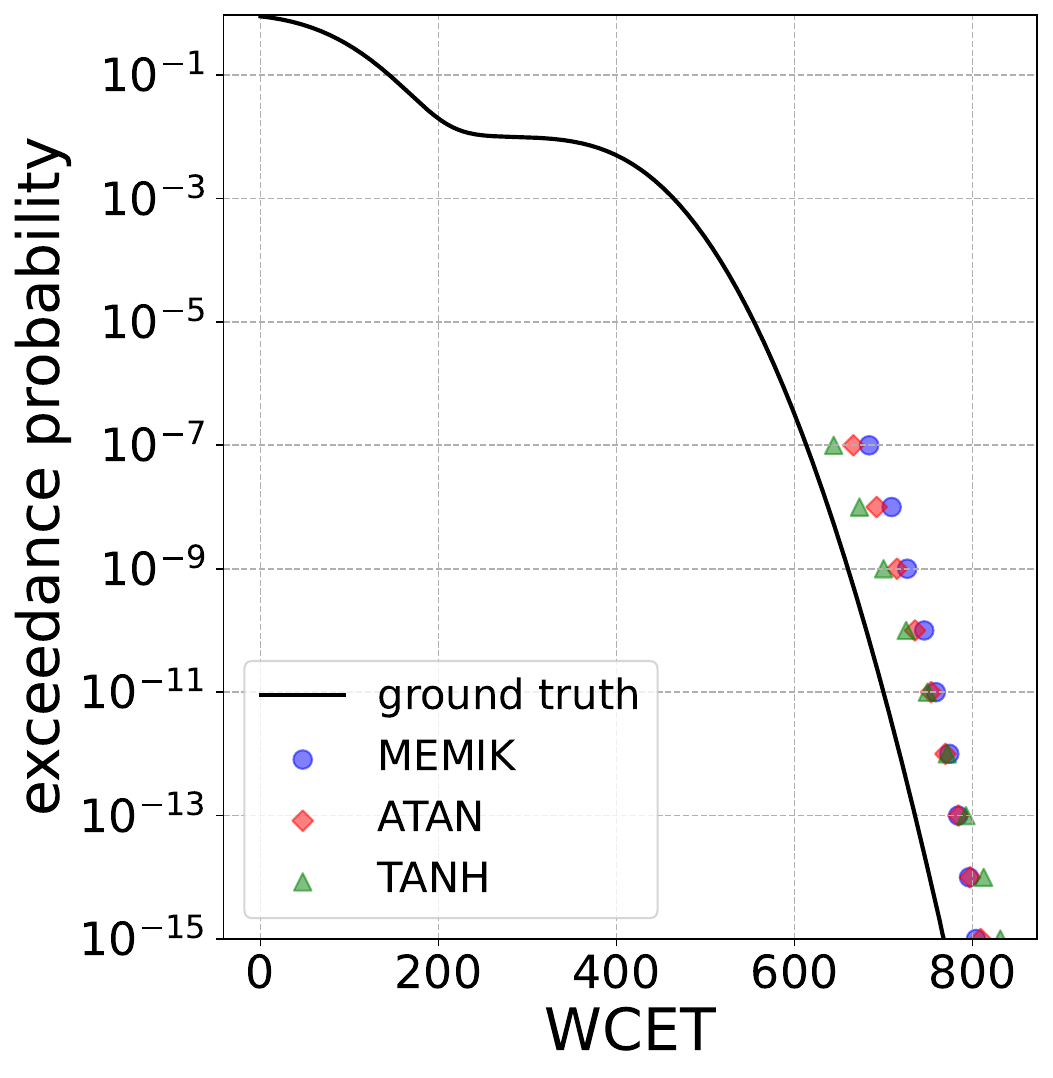}
        \subcaption{Estimated WCET values for MixtureB}
        \label{fig:eval1_mixtureB_ccdf}
    \end{minipage}\hfill
    \begin{minipage}[t]{0.45\linewidth}\vspace{0pt}
        \centering
        \includegraphics[width=\linewidth]{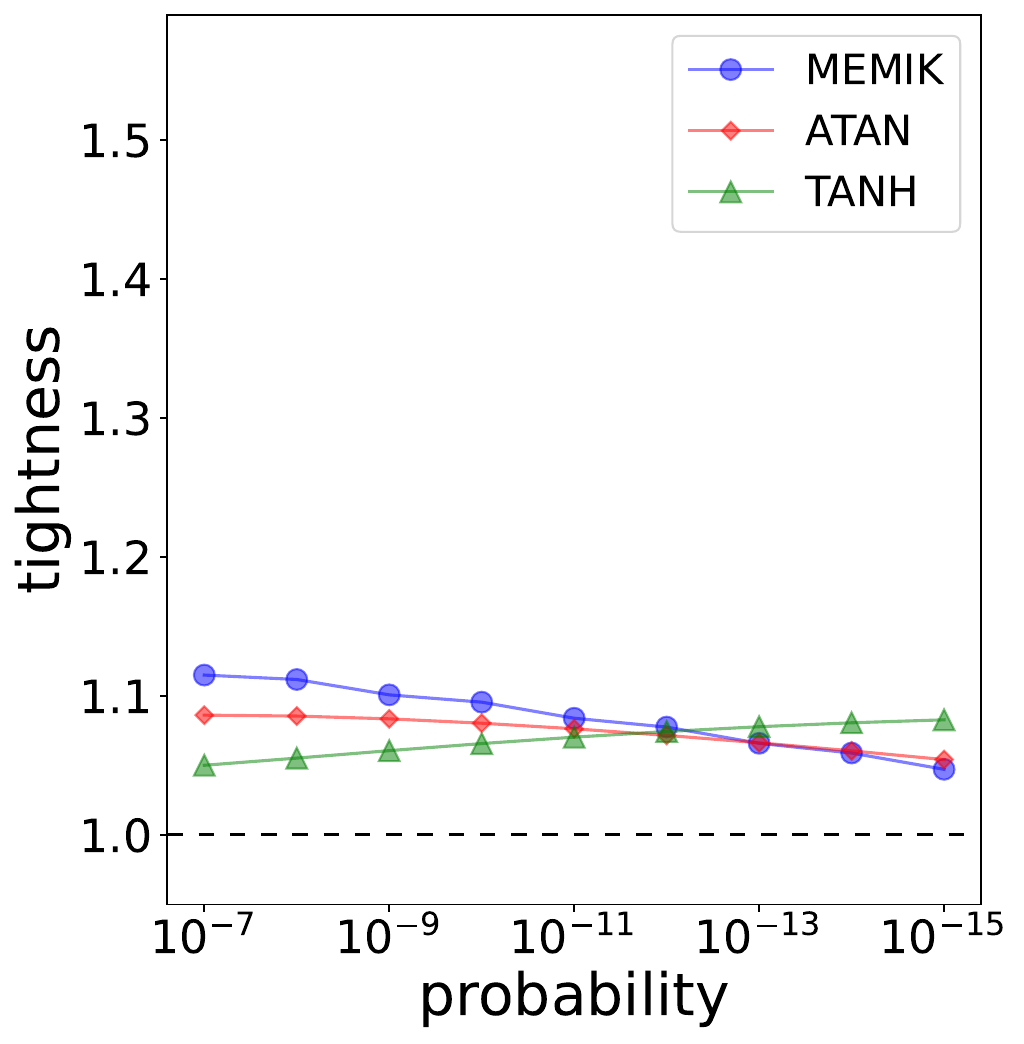}
        \subcaption{Tightness for MixtureB}
        \label{fig:eval1_mixtureB_tightness}
    \end{minipage}
    \caption{Evaluation result for MixtureB}
    \label{fig:eval1_mixtureB}
\vspace{4mm}
\end{figure}
\begin{figure}[tb]
    \centering
    \begin{minipage}[t]{0.45\linewidth}\vspace{0pt}
        \centering
        \includegraphics[width=\linewidth]{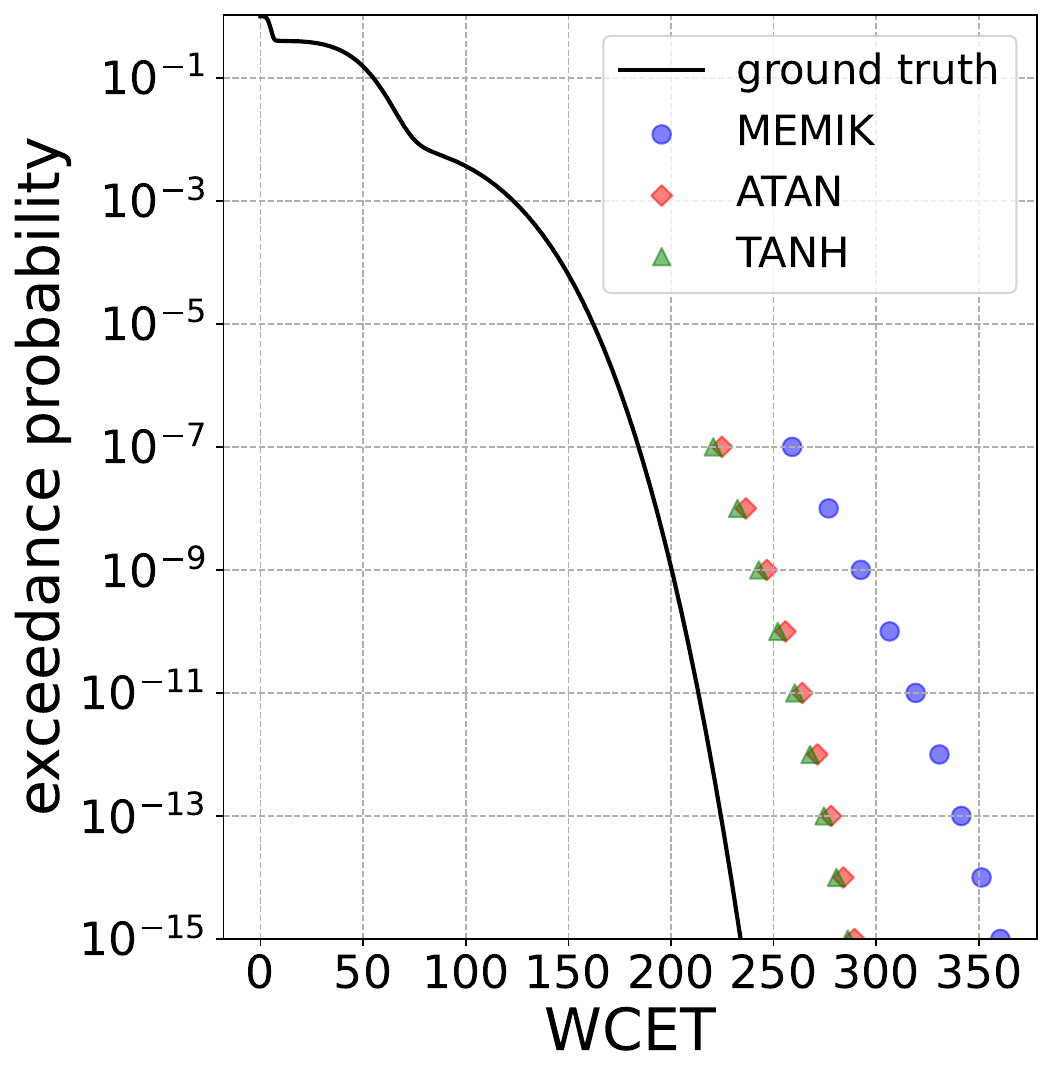}
        \subcaption{Estimated WCET values for MixtureC}
        \label{fig:eval1_mixtureC_ccdf}
    \end{minipage}\hfill
    \begin{minipage}[t]{0.45\linewidth}\vspace{0pt}
        \centering
        \includegraphics[width=\linewidth]{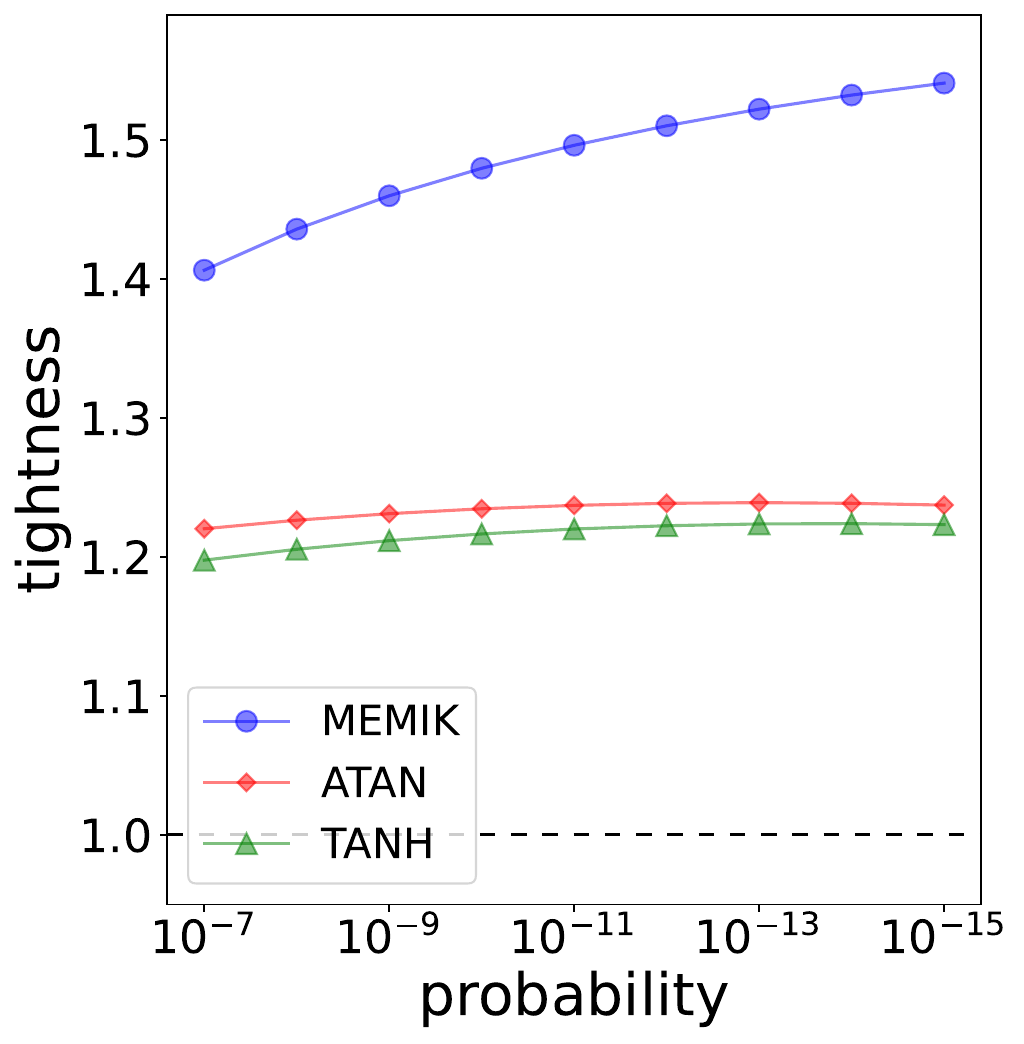}
        \subcaption{Tightness for MixtureC}
        \label{fig:eval1_mixtureC_tightness}
    \end{minipage}
    \caption{Evaluation result for MixtureC}
    \label{fig:eval1_mixtureC}
\vspace{4mm}
\end{figure}
\begin{figure}[tb]
    \centering
    \begin{minipage}[t]{0.45\linewidth}\vspace{0pt}
        \centering
        \includegraphics[width=\linewidth]{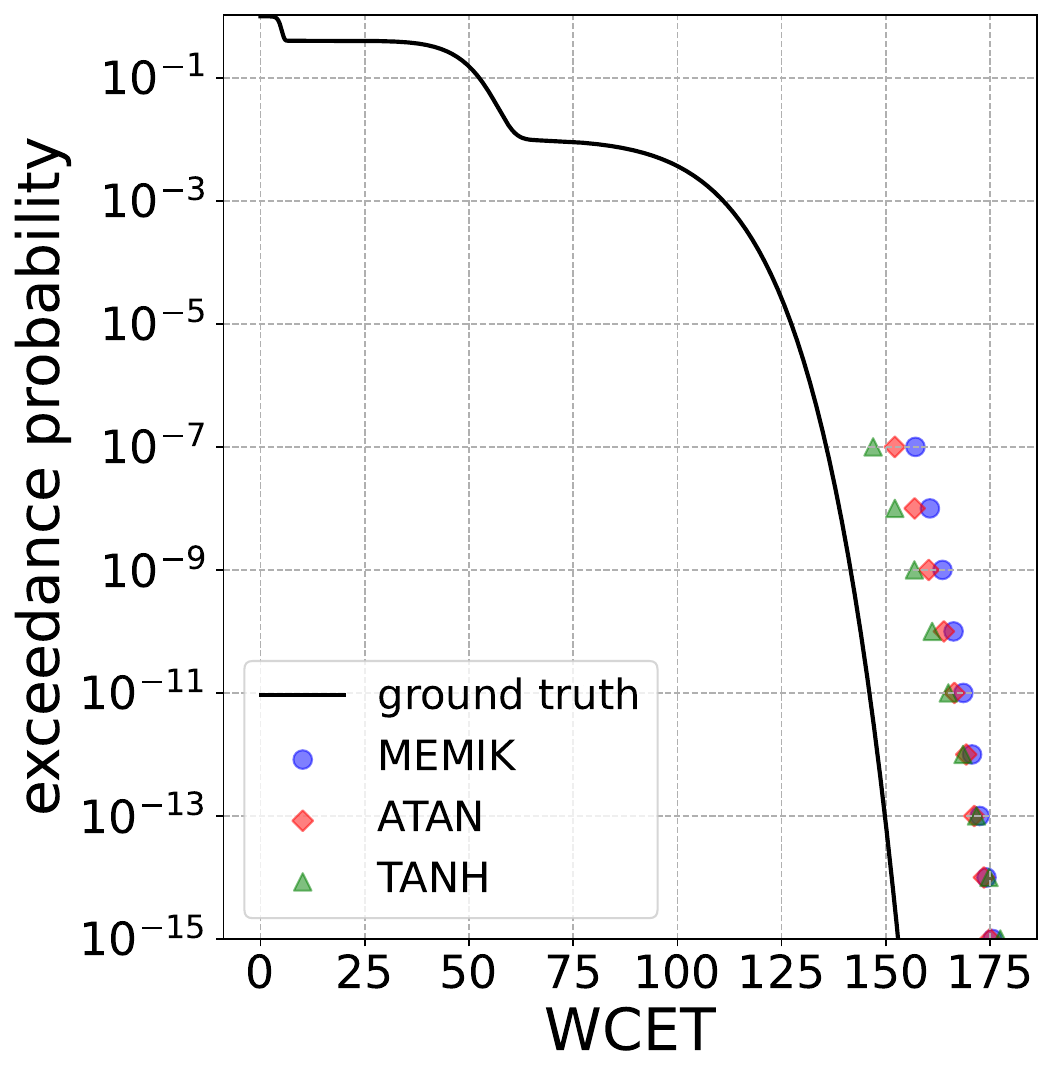}
        \subcaption{Estimated WCET values for MixtureD}
        \label{fig:eval1_mixtureD_ccdf}
    \end{minipage}\hfill
    \begin{minipage}[t]{0.45\linewidth}\vspace{0pt}
        \centering
        \includegraphics[width=\linewidth]{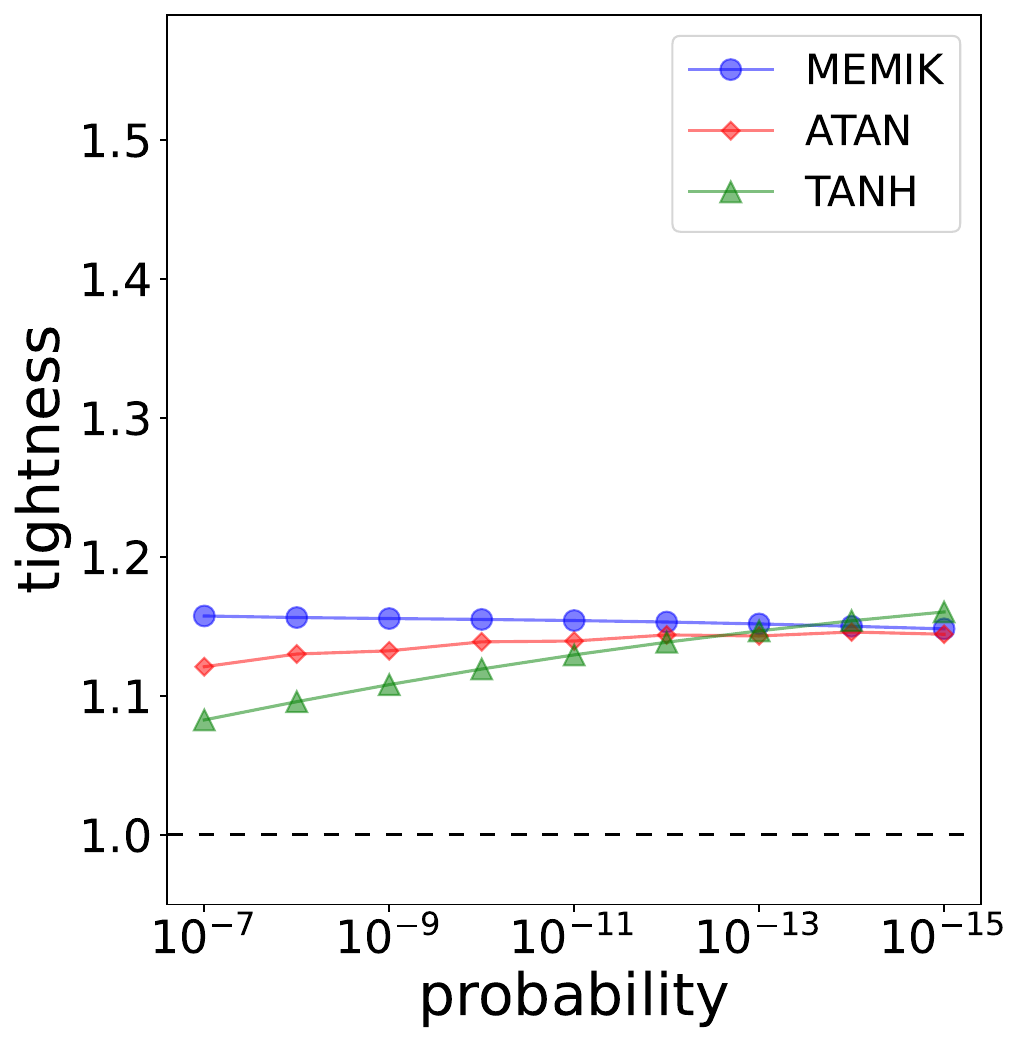}
        \subcaption{Tightness for MixtureD}
        \label{fig:eval1_mixtureD_tightness}
    \end{minipage}
    \caption{Evaluation result for MixtureD}
    \label{fig:eval1_mixtureD}
\end{figure}

Evaluation on the Beta distributions (\figslist{fig:eval1_betaA,fig:eval1_betaB}) shows that tightness grows steeply as the target probability decreases; this rise is more pronounced than for any other distribution.
A closer look at the CCDFs (\figslist{fig:eval1_betaA_ccdf,fig:eval1_betaB_ccdf}) clarifies the cause.
Just at, or slightly beyond, the probability range observable with the $10^{6}$-sample data set, the CCDFs drop off sharply.
Because each estimation procedure relies solely on the observed data and is configured to avoid underestimation, such abrupt tails inevitably inflate the error, so larger discrepancies are expected for distributions of this form.

For the Gamma distributions (\figslist{fig:eval1_gammaA,fig:eval1_gammaB}), the TANH variant performs noticeably worse for GammaA.
This shortcoming is likely due to an inadequate search range for the parameter $d$ in Inequality~\labelcref{eq:memik_tanh}.
Because both saturating functions introduced in the proposed approach contain an almost linear segment, they should not lag far behind MEMIK under suitable parameterization.
If $d$ is chosen too small or too large, however, that guarantee no longer holds, and the range used for GammaA appears sub-optimal.
Another point worth noting is the downward trend in tightness for MEMIK as the target probability decreases; at $p = 10^{-15}$ the value is very close to 1.0, indicating a potential risk of underestimation at even lower probabilities.
In contrast, ATAN maintains relatively stable tightness across the entire probability range.

Gaussian-mixture distributions yield the patterns displayed in \figslist{fig:eval1_mixtureA,fig:eval1_mixtureB}.
In this setting the TANH estimator again behaves less consistently than the other methods, suggesting that the chosen search range for the parameter $d$ was inadequate.
By contrast, MEMIK and ATAN produce very similar estimates, and no sizable gap emerges between them across the evaluated probabilities.

Results for the Weibull-mixture distributions appear in \figslist{fig:eval1_mixtureC,fig:eval1_mixtureD}.
For Mixture C, a clear gap separates MEMIK from the two proposals across the entire probability range.
A sizable improvement under a specific parameter setting was already noted for the single-Weibull case, so the same phenomenon is consistent here.
TANH again departs from both MEMIK and ATAN for Mixture D; this behavior may be mitigated by choosing a more suitable search range for the parameter $d$.

Across all evaluated distributions, no method produced underestimation, and the proposed approaches generally delivered the best tightness.
Although TANH excelled for certain distributions, its performance was not as consistent as that of ATAN.
From the standpoint of versatility, ATAN is preferable, since it consistently outperforms MEMIK while maintaining stable behavior over the full distribution set.

\subsection{Autonomous Driving System Use Case}\label{ssub:eval_actual}

\begin{table}[tb]
    \centering
    \caption{Tightness of the estimates at $p=10^{-5}$ for the Autoware case study}
    \label{tab:eval2_tightness}
    \renewcommand{\arraystretch}{1.1}
    \begin{tabular}{l|ccc}\hline\hline
         & \textbf{MEMIK} & \textbf{ATAN} & \textbf{TANH} \\ \hline
        cbA & 1.21 & 1.20 & 1.18 \\
        cbB & 1.26 & 1.26 & 1.26 \\
        cbC & 1.32 & 1.26 & 1.29 \\
        cbD & 1.12 & 1.12 & 1.09 \\
        cbE & 1.43 & 1.37 & 1.37 \\ \hline
    \end{tabular}
    \vspace{-5mm}
\end{table}

The proposed approach with execution-time traces collected from a real software stack is assessed in this section.
Autoware~\cite{autoware}, an open-source autonomous driving platform, is executed on the AWSIM digital-twin simulator~\cite{awsim}, and five callbacks (cbA -- cbE) are randomly selected for analysis.
A total of $10^{6}$ execution-time observations is recorded.
Because a closed-form ground-truth quantile cannot be obtained from the empirical data, the sample size for each method is fixed at $n = 10^{4}$, and the target exceedance probability is set to $p = 10^{-5}$.
With the available $10^{6}$ observations, this setting yields a quantile that is close to the true value with reasonably high confidence.
Targeting lower probabilities would be desirable, yet collecting more than $10^{6}$ samples already required roughly two weeks of simulation, making a deeper tail study impractical within the given time budget.
As in the synthetic-data experiments, tightness is used as the evaluation metric, ensuring results remain comparable across all data sets.

The results are summarized in \tabref{tab:eval2_tightness}.
For every callback, the proposed methods yield bounds that are either tighter than or equivalent to those obtained with the prior method.
The pattern nevertheless differs from one callback to the next; an outcome that again appears to stem from differences in the underlying execution-time distributions, as already noted in the synthetic-data study.
Each callback clearly exhibits its own distributional characteristics.
With respect to the two variants, ATAN and TANH, no firm conclusion can be drawn here: the real-system trace provides only a single target probability, in contrast to the broad range examined with synthetic data, leaving the comparative evidence inconclusive.

\subsection{Lessons Learned}\label{ssub:eval_lessons}

\textbf{Summary of findings.}
Across the entire experimental evaluation, the two proposed variants behave differently.
ATAN consistently delivers safe and tight bounds for every synthetic and real-system workload, making it the most generic choice.
Although TANH outperforms ATAN (and MEMIK) on a subset of distributions, its advantage is conditional; in several cases it proves less stable than ATAN.

\textbf{Balancing safety and pessimism.}
Introducing saturating functions into Chebyshev's inequality bounds reduces the pessimism while preserving the no-underestimation guarantee.
The empirical results confirm that this trade-off can be achieved without incurring model uncertainty or elaborate hyper-parameter tuning.

\textbf{Impact of the scale parameter $d$.}
The scaling factor $d$ has a decisive influence on tightness.
If $d$ is chosen too large, the bound remains nearly linear and little benefit is gained; if $d$ is too small, the curve saturates too early and tightness deteriorates.
The current study relies on a coarse grid search; although adequate for demonstrating the concept, such exhaustive scans still leave room for sub-optimal choices.
Future work should investigate automated selection strategies, for example, data-driven heuristics or lightweight optimization procedures, that adapt $d$ to the observed sample without exhaustive exploration and with provable safeguards against underestimation.

\section{Related Work}\label{sec:related_work}

Due to the increasing complexity of platforms resulting from the growing demands for computational performance, probabilistic and statistical methods have been gaining more attention as promising solutions.

EVT has emerged as a means to statistically model the WCET of programs.
Since WCET is generally a rare event in actual program executions, EVT, which models the probability of extreme events, is considered suitable for modeling WCET.
A number of measurement-based probabilistic timing analysis (MBPTA) methods are founded on EVT~\cite{2019_ACMSurv}.

EVT-based pWCET estimation can severely impact the reliability of the results obtained if proper verification using statistical tools is not performed.
There is research that introduces statistical tools for verifying each requirement necessary when using EVT and then actually validates executions on commercial multicore platforms~\cite{2023_DAES}.
This study shows that the requirements of EVT are often not met on the target platforms, highlighting the risks and vulnerabilities when applying EVT for pWCET estimation.

Even when the prerequisites of EVT are satisfied, the selection of parameters for the fitted distribution (GEV or GPD) and the input to the EVT process (i.e., sample selection) still have an impact.
Regarding the former, there exists research that quantifies uncertainty by exploring the parameter space based on statistical testing methods~\cite{2020_TECS}.
Furthermore, indicators to evaluate the trade-off between safety and tightness among acceptable distributions are defined, and a methodology to address this is provided.
Unfortunately, concerning model uncertainty due to input, since an optimal sample selection method is not currently provided, it cannot be eliminated or quantified.

Research has been conducted to eliminate model uncertainty in estimation by setting an upper bound on exceedance probability using concentration inequalities that involve higher-order moments~\cite{2022_ECRTS}.
This approach is closely related to our work.
The authors point out that while Markov's inequality is useful as an upper bound without model uncertainty, it yields pessimistic results for high quantiles.
They address this by using power-of-$k$ function.
Their inequality provides a highly accurate upper bound, but they exclude heavy-tailed distributions from consideration. 
Although they claim that the inequality can also be applied to heavy-tailed distributions, applying the inequality to them in practice yields pessimistic results, as explained in \cref{ssub:motivating_example}.

\newcolumntype{Y}{>{\centering\arraybackslash}X}
\begin{table}[tb]
    \centering
    \caption{Comparison of this paper with existing studies} 
    \label{tab:related_work}
    \setlength{\tabcolsep}{3pt}
    \renewcommand{\arraystretch}{1.1}
    \begin{tabularx}{\linewidth}{l|YYYY}\hline\hline
              & addressing statistical uncertainty & addressing model uncertainty & light-tailed distributions & heavy-tailed distributions \\ \hline
    ACM TECS 2020~\cite{2020_TECS} & \checkmark &   & \checkmark & \checkmark \\
    MEMIK with RESTK~\cite{2022_ECRTS} & \checkmark & \checkmark & \checkmark &   \\
    EVT + Copula~\cite{EVT_Copula} &  & \checkmark & \checkmark & \checkmark \\
    This paper & \checkmark & \checkmark & \checkmark & \checkmark \\ \hline
    \end{tabularx}
    \vspace{-3mm}
\end{table}

\section{Conclusion}\label{sec:conclusion}

This paper proposed two new variants of inequality-based pWCET estimation that incorporate arctangent and hyperbolic tangent functions to reduce pessimism while preserving safety, without relying on model assumptions. 
These methods address a key limitation of existing approaches by mitigating the influence of extreme execution times, particularly in heavy-tailed distributions. 
Through extensive experiments using both synthetic and real-world execution-time data, we confirmed that the proposed variants consistently achieve tighter, yet safe, upper bounds.

Despite these promising results, several open issues remain.
Most notably, the quality of the input samples warrants further attention.
As shown in the evaluation, certain combinations of distribution shape and sample size can amplify estimation error.
In addition, the representativeness of the collected traces themselves must be guaranteed.
Determining how to obtain ``good'' samples will therefore be a key objective for future work.

\begin{acknowledgment}
This work was supported by JST AIP Acceleration Research JPMJCR25U1 and JST CREST Grant Number JPMJCR23M1, Japan.
\end{acknowledgment}

\bibliographystyle{ipsjunsrt-e}
\bibliography{reference}

@book{EVT,
    author    = {Coles, Stuart},
    title     = {{An Introduction to Statistical Modeling of Extreme Values}},
    publisher = {Springer},
    series    = {Springer Series in Statistics},
    year      = {2001}
}

@INPROCEEDINGS{2012_ECRTS,
    author={Cucu-Grosjean, Liliana and Santinelli, Luca and Houston, Michael and Lo, Code and Vardanega, Tullio and Kosmidis, Leonidas and Abella, Jaume and Mezzetti, Enrico and Quiñones, Eduardo and Cazorla, Francisco J.},
    booktitle={In Proceedings of the 24th ECRTS}, 
    title={{Measurement-Based Probabilistic Timing Analysis for Multi-path Programs}}, 
    year={2012},
    volume={},
    number={},
    pages={91-101},
}

@INPROCEEDINGS{2017_RTSS,
    author={Silva, Karila Palma and Arcaro, Luis Fernando and Silva De Oliveira, Romulo},
    booktitle={{In Proceedings of the 38th IEEE RTSS}}, 
    title={{On Using GEV or Gumbel Models When Applying EVT for Probabilistic WCET Estimation}}, 
    year={2017},
    volume={},
    number={},
    pages={220-230},
}

@article{2017_TODAES,
    author = {Abella, Jaume and Padilla, Maria and Castillo, Joan Del and Cazorla, Francisco J.},
    title = {{Measurement-Based Worst-Case Execution Time Estimation Using the Coefficient of Variation}},
    year = {2017},
    publisher = {Association for Computing Machinery},
    volume = {22},
    number = {4},
    journal = {ACM TODAES},
}

@inproceedings{2010_WCET,
    author = {Griffin, D and Burns, A},
    booktitle = {{In Proceedings of the 10th WCET}},
    pages = {49-57},
    title = {{Realism in Statistical Analysis of Worst Case Execution Times}},
    year = {2010}
}

@article{2008_TECS,
    author = {Wilhelm, Reinhard and Engblom, Jakob and Ermedahl, Andreas and Holsti, Niklas and Thesing, Stephan and Whalley, David and Bernat, Guillem and Ferdinand, Christian and Heckmann, Reinhold and Mitra, Tulika and Mueller, Frank and Puaut, Isabelle and Puschner, Peter and Staschulat, Jan and Stenstr\"{o}m, Per},
    title = {{The Worst-Case Execution-Time Problem—Overview of Methods and Survey of Tools}},
    journal = {ACM Trans. Embed. Comput. Syst.},
    year = {2008},
    publisher = {Association for Computing Machinery},
    volume = {7},
    number = {3},
    articleno = {36},
    numpages = {53},
}

@article{2019_ACMSurv,
    author = {Cazorla, Francisco J. and Kosmidis, Leonidas and Mezzetti, Enrico and Hernandez, Carles and Abella, Jaume and Vardanega, Tullio},
    title = {{Probabilistic Worst-Case Timing Analysis: Taxonomy and Comprehensive Survey}},
    journal = {ACM Computing Surveys},
    year = {2019},
    publisher = {Association for Computing Machinery},
    volume = {52},
    number = {1},
    articleno = {14},
    numpages = {35},
}

@INPROCEEDINGS{2016_ECRTS,
    author={Lima, George and Dias, Dário and Barros, Edna},
    booktitle={{In Proceedings of the 28th ECRTS}}, 
    title={{Extreme Value Theory for Estimating Task Execution Time Bounds: A Careful Look}}, 
    year={2016},
    volume={},
    number={},
    pages={200-211},
}

@ARTICLE{2017_ESLetters,
    author={Jiménez Gil, Samuel and Bate, Iain and Lima, George and Santinelli, Luca and Gogonel, Adriana and Cucu-Grosjean, Liliana},
    journal={{IEEE Embedded Systems Letters}}, 
    title={{Open Challenges for Probabilistic Measurement-Based Worst-Case Execution Time}}, 
    year={2017},
    volume={9},
    number={3},
    pages={69-72},
}

@article{EVCondition,
    author = {Dietrich, D and Haan, L.D and Hüsler, J},
    journal = {{Extremes}},
    volume = {5},
    pages = {71-85},
    title = {{Testing Extreme Value Conditions}},
    year = {2002}
}

@article{1867_chebyshev,
    author = {Tchébychef, P.},
    journal = {{Journal de Mathématiques Pures et Appliquées}},
    pages = {177-184},
    title = {{Des valeurs moyennes}},
    year = {1867}
}

@phdthesis{1884_markov,
    author = {Markov, Andrey},
    title = {{On Certain Applications of Algebraic Continued Fractions}},
    school = {St Petersburg},
    year = {1884}
}

@InProceedings{2022_ECRTS,
    author =	{Vilardell, Sergi and Serra, Isabel and Mezzetti, Enrico and Abella, Jaume and Cazorla, Francisco J. and del Castillo, Joan},
    title =	{{Using Markov’s Inequality with Power-Of-k Function for Probabilistic WCET Estimation}},
    booktitle =	{In Proceedings of the 34th ECRTS},
    pages =	{20:1--20:24},
    year =	{2022},
}

@article{1928_fisher,
    title={{Limiting Forms of the Frequency Distribution of the Largest or Smallest Member of a Sample}},
    volume={24},
    number={2},
    journal={Mathematical Proceedings of the Cambridge Philosophical Society}, 
    author={Fisher, R. A. and Tippett, L. H. C.}, 
    year={1928}, 
    pages={180--190}
}

@article{1943_gnedenko,
     author = {Gnedenko, B.},
     journal = {Annals of Mathematics},
     number = {3},
     pages = {423--453},
     title = {{Sur La Distribution Limite Du Terme Maximum D'Une S\'{e}rie Al\'{e}atoire}},
     volume = {44},
     year = {1943}
}

@article{1974_balkema,
    author = {Balkema, A. A. and de Haan, L.},
    title = {{Residual Life Time at Great Age}},
    volume = {2},
    journal = {The Annals of Probability},
    number = {5},
    publisher = {Institute of Mathematical Statistics},
    pages = {792--804},
    year = {1974},
}

@article{1975_pickands,
    author = {Pickands, III, James},
    title = {{Statistical Inference Using Extreme Order Statistics}},
    volume = {3},
    journal = {The Annals of Statistics},
    number = {1},
    pages = {119--131},
    year = {1975},
}

@article{2023_DAES,
    author = {Vasconcelos, Jamile and Lima, George and Wehaiba El Khazen, Marwan and Gogonel, Adriana and Cucu-Grosjean, Liliana},
    title = {{On Vulnerabilities in EVT-based Timing Analysis: An Experimental Investigation on a Multi-core Architecture}},
    year = {2023},
    volume = {28},
    number = {1},
    journal = {Design Automation for Embedded Systems},
    pages = {1--21}
}

@article{2020_TECS,
    author = {Reghenzani, Federico and Santinelli, Luca and Fornaciari, William},
    title = {{Dealing with Uncertainty in pWCET Estimations}},
    year = {2020},
    volume = {19},
    number = {5},
    journal = {ACM Trans. Embed. Comput. Syst.},
    articleno = {33},
    numpages = {23}
}

@article{EVT_Copula,
    title = {{Hybrid probabilistic timing analysis with Extreme Value Theory and Copulas}},
    journal = {Microprocessors and Microsystems},
    pages = {104419},
    year = {2022},
    author = {Bekdemir, Levent and Bazlamaçcı, Cüneyt F.}
}

@INPROCEEDINGS{autoware,
    author={Kato, Shinpei and Tokunaga, Shota and Maruyama, Yuya and Maeda, Seiya and Hirabayashi, Manato and Kitsukawa, Yuki and Monrroy, Abraham and Ando, Tomohito and Fujii, Yusuke and Azumi, Takuya},
    booktitle={In Proceedings of the 9th ACM/IEEE ICCPS}, 
    title={{Autoware on Board: Enabling Autonomous Vehicles with Embedded Systems}}, 
    year={2018},
    volume={},
    number={},
    pages={287-296}
}

@webpage{awsim,
    author = {{TIER IV, Inc.}},
    title ={{AWSIM}},
    url = {https://tier4.github.io/AWSIM/},
    refdate = {2025-09-06}
}

\end{document}